\newcommand{\set}[1]{\{#1\}}
\def\ours{\textsc{RVG}\xspace}
\def\layerrange{[\theta_{lb}, \theta_{ub}]}
\def\VG{\mathcal{VG}\xspace}
\def\Layer{\mathcal{L}\textit{ayer}\xspace}
\def\RVG{\mathcal{RVG}\xspace}
\newif\ifdraft
\newcommandx{\nt}[2][1=]{\todo[linecolor=red,
			backgroundcolor=red!10,bordercolor=red,#1]{#2}}
\newcommandx{\jy}[2][1=]{\todo[linecolor=green,
			backgroundcolor=green!10,bordercolor=green,#1]{JY:#2}}
\newcommandx{\dz}[2][1=]{\todo[linecolor=red,
			backgroundcolor=red!10,bordercolor=red,#1]{DZ:#2}}
\newcommand{\nt}[1]{{}}
\newcommand{\jy}[1]{{}}
\newcommand{\dz}[1]{{}}
\newif\iftwocolumn
\newtheorem{theorem}{Theorem}[section]
\theoremstyle{definition}
\theoremstyle{remark}
\def\subsubsection{\@startsection{subsubsection}
                                 {3}
                                 {\z@ \hspace*{1mm}}
                                 {0ex plus 0.1ex minus 0.1ex}
                                 {0ex}
                                 {\normalfont\normalsize\itshape}}
\title{
Asymptotically-Optimal Multi-Query Path Planning for a Polygonal Robot
}
\author{
Duo Zhang \qquad Zihe Ye \qquad Jingjin Yu
\thanks{D. Zhang, Z. Ye, and J. Yu are with the Department of 
Computer Science, Rutgers, the State University of New Jersey, Piscataway, NJ, USA. 
Emails: {\tt\small \{duo.zhang, zihe.ye, jingjin.yu\}@rutgers.edu}.
}
\thanks{This work was supported in part by NSF awards IIS-1845888, IIS-2132972, and CCF-2309866. 
}
}
\begin{document}

\maketitle
\thispagestyle{empty}
\pagestyle{empty}

\pdfcompresslevel=9
\pdfobjcompresslevel=2
\ifdraft
\begin{picture}(0,0)%
\put(-12,105){
\framebox(505,40){\parbox{\dimexpr2\linewidth+\fboxsep-\fboxrule}{
\textcolor{blue}{
The file is formatted to look identical to the final compiled IEEE 
conference PDF, with additional margins added for making margin 
notes. Use $\backslash$todo$\{$...$\}$ for general side comments
and $\backslash$jy$\{$...$\}$ for JJ's comments. Set 
$\backslash$drafttrue to $\backslash$draftfalse to remove the 
formatting. 
}}}}
\end{picture}\usepackage{graphicx}
\usepackage{subcaption}

\vspace*{-5mm}
\fi

\begin{abstract}
Shortest-path roadmaps, also known as reduced visibility graphs, provide a highly efficient multi-query method for computing optimal paths in two-dimensional environments. Combined with Minkowski sum computations, shortest-path roadmaps can compute optimal paths for a translating robot in 2D. In this study, we explore the intuitive idea of stacking up a set of reduced visibility graphs at different orientations for a polygonal holonomic robot to support the fast computation of near-optimal paths, allowing simultaneous 2D translation and rotation. The resulting algorithm, \emph{rotation-stacked visibility graph} (\ours), is shown to be resolution-complete and asymptotically optimal. Extensive computational experiments show \ours significantly outperforms state-of-the-art single- and multi-query sampling-based methods on both computation time and solution optimality fronts. 
Source code and supplementary materials are available at \href{https://github.com/arc-l/rvg}{\texttt{\textcolor{blue}{https://github.com/arc-l/rvg}}}. 
\end{abstract}

\section{Introduction}\label{sec:intro}
In many robotics applications, it is desirable to compute high-quality collision-free paths quickly for a translating and rotating object in two dimensions. A prominent example is the path planning for robotic vehicles, which are often rectangular (including squares), to navigate in indoor (e.g., warehouses) or outdoor environments.
Another frequently encountered challenge is finding such paths for manipulating objects on flat surfaces. This can happen when heavy furniture is being moved around in a house or when some objects to be rearranged on a tabletop cannot be lifted above other objects and directly transported around (e.g., \cite{huang2023toward}, see Fig.~\ref{fig:app}). In these cases, a short (in terms of Euclidean distance) collision-free path is often preferred. 
Whereas such problems can be solved using combinatorial approaches \cite{avnaim1988practical, alt1990approximate, halperin1996near, agarwal1999motion}, sampling-based algorithms \cite{kuffner2000rrt, kavraki1996probabilistic,gammell2020batch,strub2020adaptively, bohlin2000path,salzman2013motion,salzman2016asymptotically, karaman2011sampling, Kavraki2008} and gradient-based methods \cite{schulman2013finding, biegler2009large, hauser2021semi, zhang2023provably, liang2024second, pan2024provably}, either solution quality could be improved or computation time needs to be accelerated. In addition, the time required to reach a solution can vary greatly for random sampling-based methods. This raises a natural question: Can a specialized high-performance algorithm be developed to plan high-quality paths for moving a polygonal robot in 2D, amongst polygonal obstacles? 

\emph{Shortest-path roadmaps} \cite{nilsson1969mobile}, also called \emph{reduced visibility graphs} \cite{latombe2012robot}, find the shortest path connecting two points in two dimensions where bitangents can be computed for the obstacles (including the environment boundary). Leveraging shortest-path roadmaps, it is shown \cite{lozano1979algorithm} that 2D shortest paths can be effectively computed for translating a polygon among static polygonal obstacles. This is achieved by first explicitly computing the \emph{configuration space} or $\mathcal C$-space \cite{lozano1990spatial} of the movable polygon by taking the Minkowski sum \cite{hadwiger1950minkowskische} between the polygon and the environment. Then, a shortest-path roadmap can be computed over the resulting free configuration space $\mathcal C_{free}$, allowing arbitrary shortest-path queries between two points within $\mathcal C_{free}$ to be quickly resolved. 
In the same study \cite{lozano1979algorithm}, 2D rotations are also considered to a limited extent through combining the start and goal rotations, which is sub-optimal and incomplete. It is hypothesized, but not explored further, that dividing the rotational degree of freedom can achieve a better outcome. 
\begin{figure}[t]
\centering
\includegraphics[width=\linewidth]{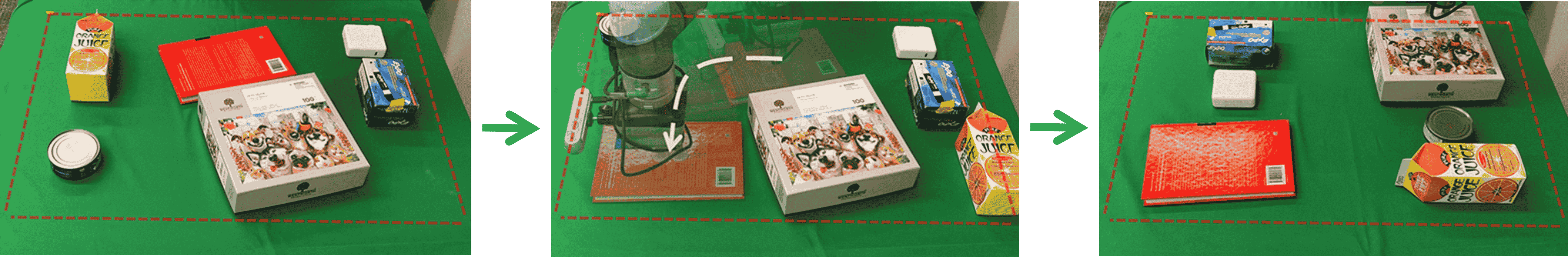}
\caption{A motivating example in developing \ours. In object rearrangement, high-quality paths often need to be planned to translate and rotate objects in a 2D workspace, e.g., a tabletop. Images are reproduced from \cite{huang2023toward}.}
\label{fig:app}
\end{figure}

Motivated by vast applications and inspired by \cite{lozano1979algorithm}, in this work, we develop \emph{rotation-stacked reduced visibility graph} ($\RVG$) for computing asymptotically-optimal, collision-free paths for moving a polygon (equivalently, a holonomic robot with a polygonal footprint) in two dimensions. Roughly speaking, \ours (our method for computing $\RVG$) slices the rotational degree of the $SE(2)$ configuration space. After computing the Minkowski sum of a rotated polygon with the environment for each rotation slice, the corresponding reduced visibility graph is constructed. Then, \ours carefully connects these reduced visibility graphs to yield a single rotation-stacked reduced visibility graph. We prove that \ours is resolution-complete and its solutions converge to the optimal solution in the limit. Perhaps more importantly, we show that \ours, as a multi-query method, delivers better computation time and solution optimality than state-of-the-art single- and multi-query sampling-based methods.

\section{Preliminaries}\label{sec:problem}
\subsection{Problem Definition}\label{subsec:prob}
Let $\mathcal W \subset \mathbb R^2$ be a compact (i.e., closed and bounded) workspace. There is a set of polygonal obstacles $\{O_i\}$. For all $i$,  $O_i \subset \mathcal W$ is compact. For $i \ne j$, $O_i \cap O_j = \varnothing$. Let $O = \bigcup_{O_i\in \{O_i\}} O_i$. A polygonal holonomic robot $r$ resides in the $\mathcal W \backslash O$, assuming a configuration $q = (x, y, \theta) \in SE(2)$. Let the free configuration space for the robot be $\mathcal C_{free}$ and let its closure be $\overline{\mathcal C_{free}}$.  
A \emph{feasible path} for the robot is a continuous map $\tau: [0, 1] \rightarrow \overline{\mathcal C_{free}}$. Intuitively, the robot may touch obstacles but may not penetrate any obstacles. 

We make a general position assumption that the robot may not be ``sandwiched'' between two obstacles in $SE(2)$, simultaneously touching both. Formally, for any feasible path $\tau$ that touches the boundary of $\overline{\mathcal C_{free}}$, there exists another feasible path $\tau'$ in the same homotopy class as $\tau$ such that for all $q \in \tau'$, the $\delta = (\delta_x, \delta_y, \delta_{\theta})$ open ball around $q$ for some arbitrarily small but fixed $\delta_x, \delta_y, \delta_{\theta} > 0$, denoted as $B_{\delta}(q)$, is contained in $\mathcal C_{free}$. We call this the \emph{$\delta$-clearance} property. This suggests $B_{\delta}(\tau(0))$ and $B_{\delta}(\tau(1))$ must themselves be contained in $\mathcal C_{free}$, which is a very mild assumption.

A \emph{cost metric} is needed to compute the cost of feasible paths to measure path optimality. Generally, such a cost for $SE(2)$ depends on weights given to the rotational degree. To that end, we define the cost of a path $\tau$ as 
\begin{align}
\begin{split}
J(\tau) = &\alpha \int_0^1 \sqrt{dx^2 + dy^2} + \beta \int_0^1 |d\theta|, \\
&\alpha, \beta \ge 0, \, \alpha\beta \ne 0, \label{eq:cost}
\end{split}
\end{align}
where $x, y, \theta$ are functions of $t$. In other words, the cost of a path $\tau$ is the weighted sum of the Euclidean path length of $\tau$'s projection to $\mathbb R^2$ and the cumulative rotations. The $\alpha =1$ and $\beta = 0$ case corresponds to ignoring the rotational cost. 
Note that costs other than Eq.~\eqref{eq:cost} may be defined depending on the robot's physical constraints. 

Given two configurations $q_0, q_1 \in \mathcal C_{free}$ between which a feasible path exists, let $\tau*$ be the path with $\tau^*(0) = q_0$ and $\tau^*(1) = q_1$ such that $J(\tau*)$ is minimized, i.e., $J_{q_0, q_1}^* := J(\tau^*) = \inf_{\tau} J(\tau)$. 
This work seeks to compute a path $\tau$ with its cost $J(\tau)$ asymptotically approaching $J_{q_0, q_1}^*$. 
\vspace{-2mm}
\subsection{Visibility and Visibility Graphs}
We briefly introduce concepts surrounding visibility and visibility graphs. For two-dimensional points $p_1, p_2 \in \mathcal W \backslash O$, they are \emph{visible} to each other if $p(t) = tp_1 + (1-t)p_2 \in \mathcal W \backslash O$ for all $t \in [0, 1]$. The definition may be naturally extended to any configuration space, including $SE(2)$: for two configurations $q_1, q_2 \in \mathcal C_{free}$, they are mutually visible if  $q(t) = tq_1 + (1-t)q_2 \in \mathcal C_{free}$ for all $t \in [0, 1]$.
A \emph{visibility graph} $\mathcal{VG}(V, E)$ can then be readily defined based on the definition of visibility, which contains a discrete set of vertices $V \subset \mathcal C_{free}$. For any two vertices $v_1, v_2 \in V$, if they are visible to each other, then there exists a straight edge $(v_1, v_2) \in E$. Conversely, $E$ only contains such edges.

Visibility graphs, properly computed over some free configuration space, greatly facilitate the computation of optimal paths between configurations that are not mutually visible. For computing optimal paths, considering all configuration space features is often unnecessary. For example, in two-dimensional polygonal environments (Fig.~\ref{fig:vg}), only \emph{reflex vertices} (i.e., vertices at which the angle is larger than 180 degrees) are needed, and only bitangents between reflex vertices are needed as the edges. Such visibility graphs are known as \emph{reduced visibility graphs}. 

\begin{figure}[h!]
\centering
\includegraphics[width=0.98\columnwidth]{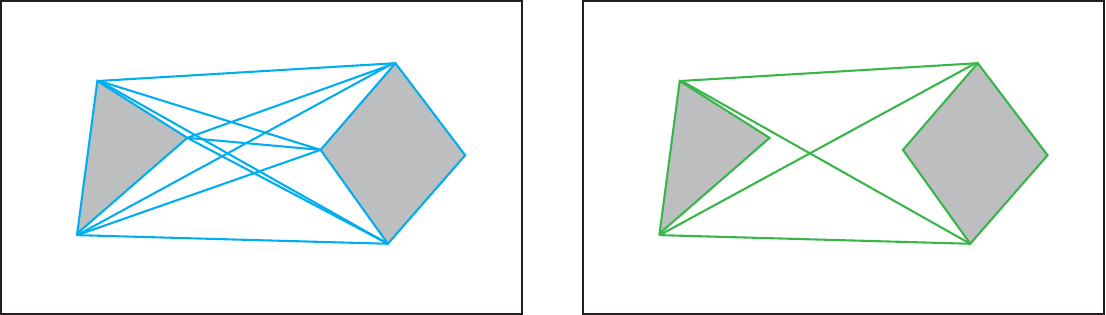}
\caption{[Left] A 2D polygonal environment and the (blue) visibility graph for it. [Right] The (green) reduced visibility graph for the same environment. The additional blue edges do not appear in shortest paths unless the start/goal happens to fall on them (a zero probability event).}
\label{fig:vg}
\end{figure}

\section{Algorithms}\label{sec:algorithm}
For a given (orientation) resolution $n$, \ours slices $SO(2)$ into $n$ layers of uniform thickness, parameterized by feasible rotation interval $\layerrange$. Subsequently, each $\Layer$ is built with its visibility graph $\VG(V, E)$ by Alg.~\ref{alg:building_layers}, allowing navigation with free rotation within $\layerrange$. Each vertex $v(x, y, \layerrange)$ in $V$ represents a feasible subspace of $SE(2)$ consisting of a feasible position in $\mathbb{R}^2$ and a feasible range of rotations $\layerrange$ in $SO(2)$.
To enable the robot to rotate across layers, in Alg.~\ref{alg:propagation}, we propagate vertices in each $\VG$ whose feasible rotation range is larger than $\layerrange$ to neighboring layers by connecting such vertices to the vertices in the $\VG$s located in neighboring layers. After propagation, $\VG$s in all layers are merged into an $\RVG$. With the $\RVG$, shortest paths can be found by any path-finding algorithm, such as A*, given any start and goal pairs.
\vspace{-2mm}
\subsection{Building Layers}
\vspace{-1mm}
\subsubsection{Visibility Queries}
To build the $\VG$ for a layer, for a reflex vertex $v$, all vertices visible to $v$ must be retrieved. We use the \textsc{TriangleExpansion} algorithm~\cite{bungiu2014efficient} to determine the region visible to $v$ given $v$ and $\set{O_i}$.
\subsubsection{Build the Visibility Graph}
Given a robot geometry $P_r$, we can find the bounding polygon $P$ of the rotation range when the robot is rotated to $\theta_{lb}$ and $\theta_{lb}$ respectively. Note that $P$ needs to be an overestimate to ensure the result is collision-free and needs to converge to the true rotation range when the resolution goes to infinity.
%
%
All bounding polygons are merged into one afterward. Then, the Minkowski differences between $P$ and the obstacles $\set{O_i}$ are taken as the grown obstacles $\set{O_i'}$, as shown in Fig~\ref{fig:semi-alge}, after which the $\VG$ is constructed on the vertices of grown obstacles $\set{O_i'}$. The visible regions of all vertices are cached in a dictionary $C$ for future use. Alg.~\ref{alg:building_layers} outlines the whole process.
\begin{figure}[h!]
\centering
\begin{subfigure}[b]{0.24\linewidth}
\includegraphics[width=\linewidth, trim=5.3cm 2.5cm 4.7cm 2.5cm, clip]{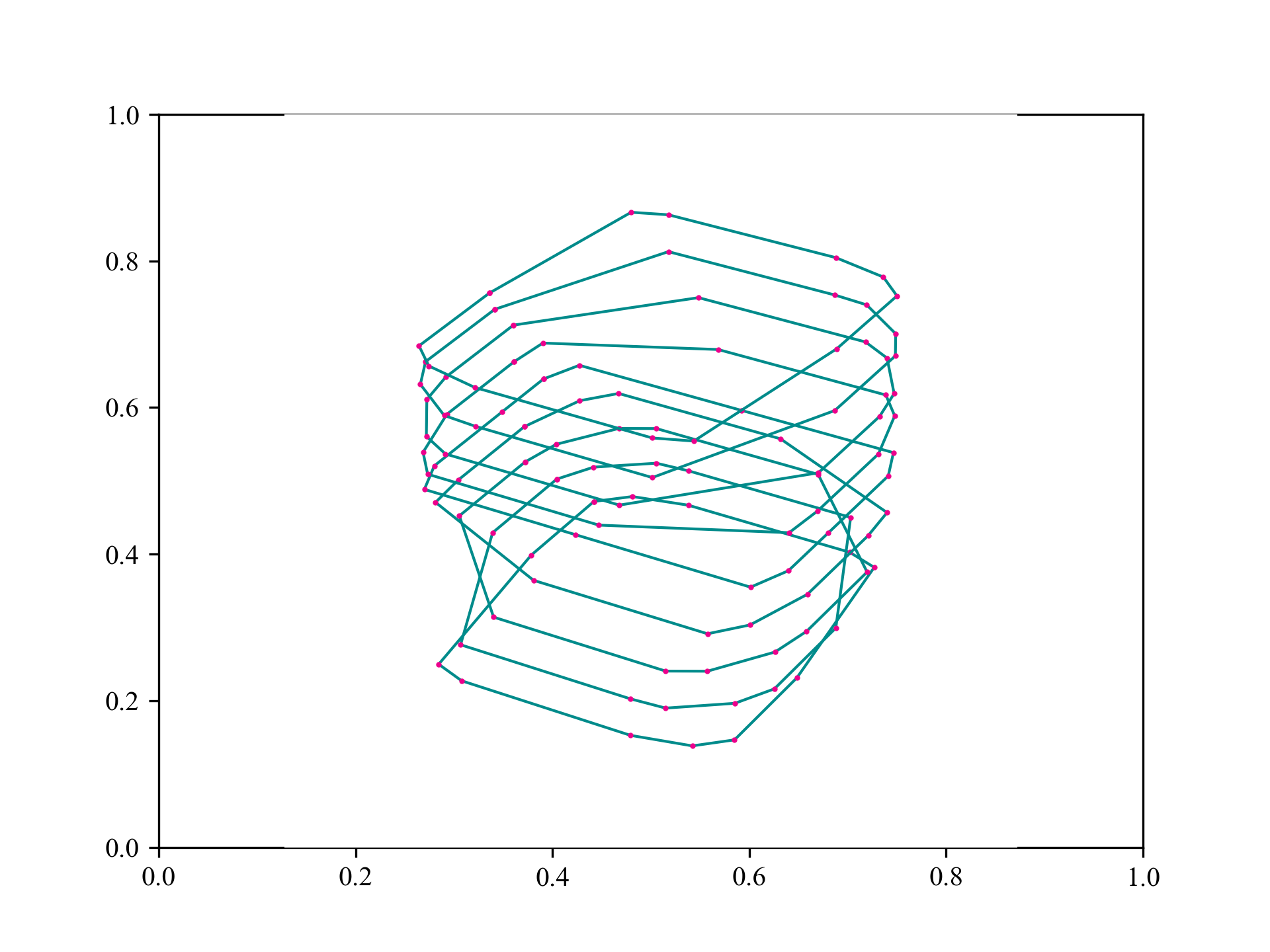}
\caption{}
\end{subfigure}
\begin{subfigure}[b]{0.24\linewidth}
\includegraphics[width=\linewidth, trim=5.3cm 2.5cm 4.7cm 2.5cm, clip]{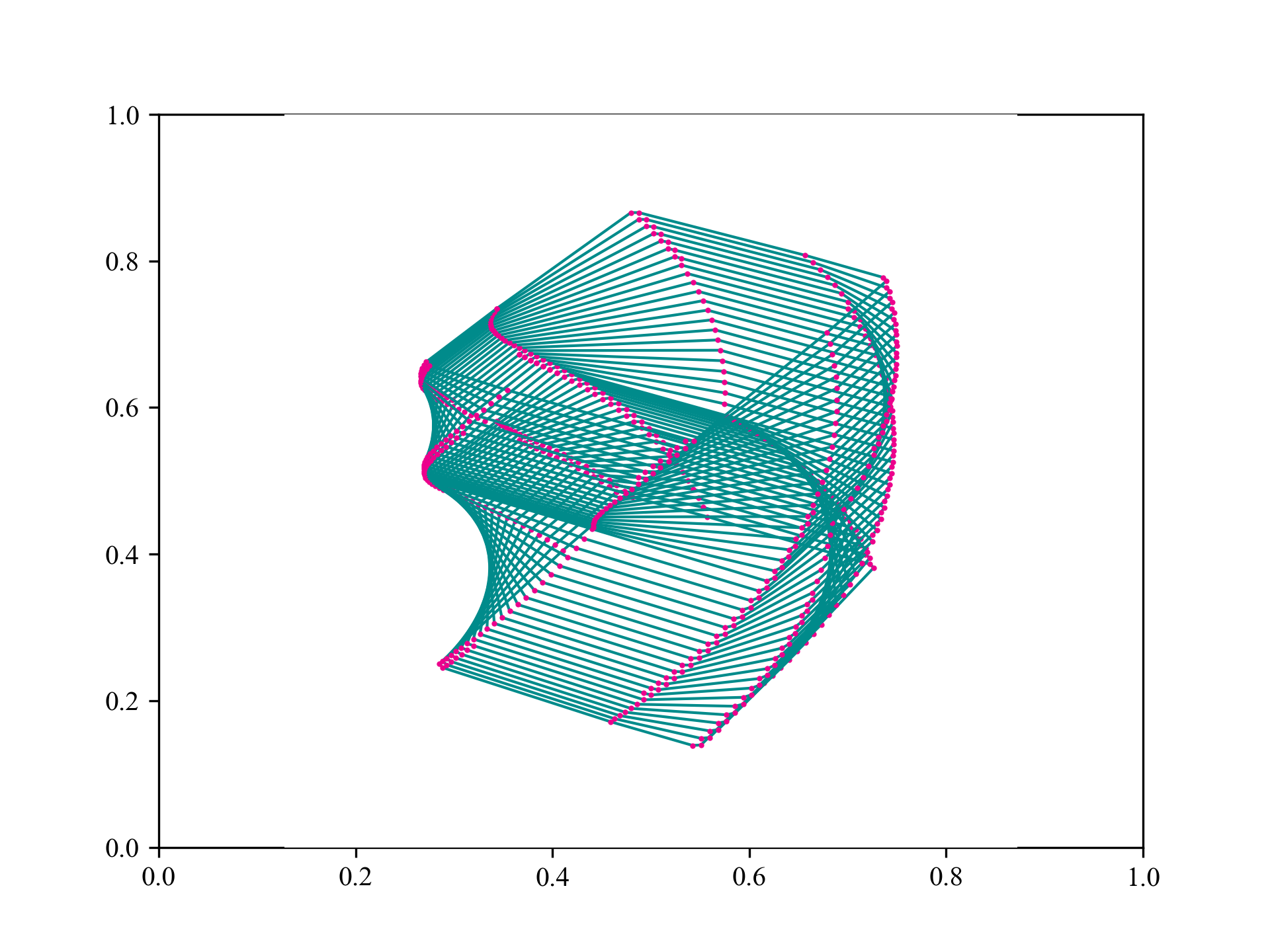}
\caption{}
\end{subfigure}
\begin{subfigure}[b]{0.24\linewidth}
\includegraphics[width=\linewidth, trim=5.3cm 2.5cm 4.7cm 2.5cm, clip]{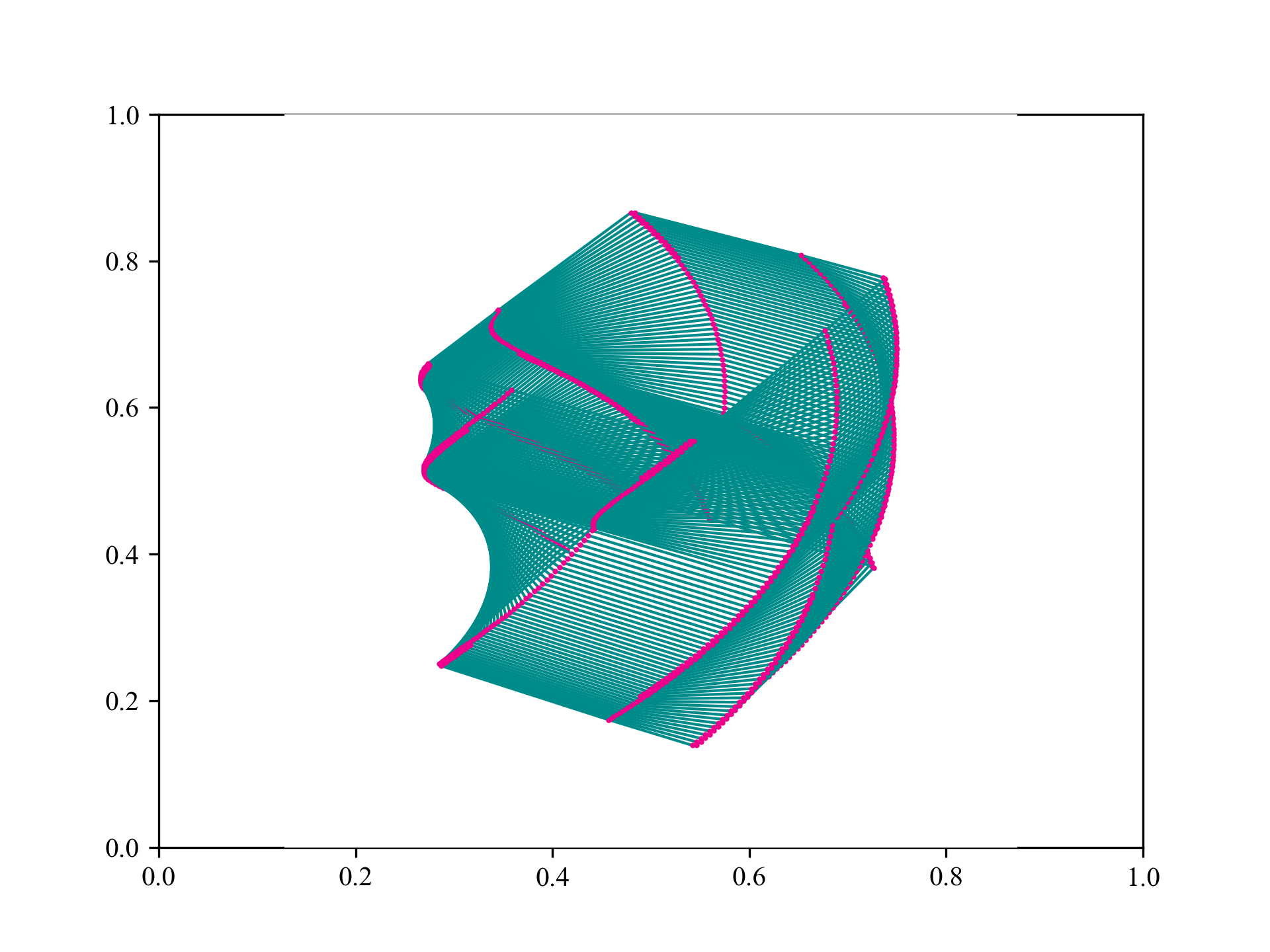}
\caption{}
\end{subfigure}
\begin{subfigure}[b]{0.24\linewidth}
\raisebox{.25\height}{\includegraphics[width=\linewidth, trim=1.22cm 0.9cm 0.8cm 0.5cm, clip]{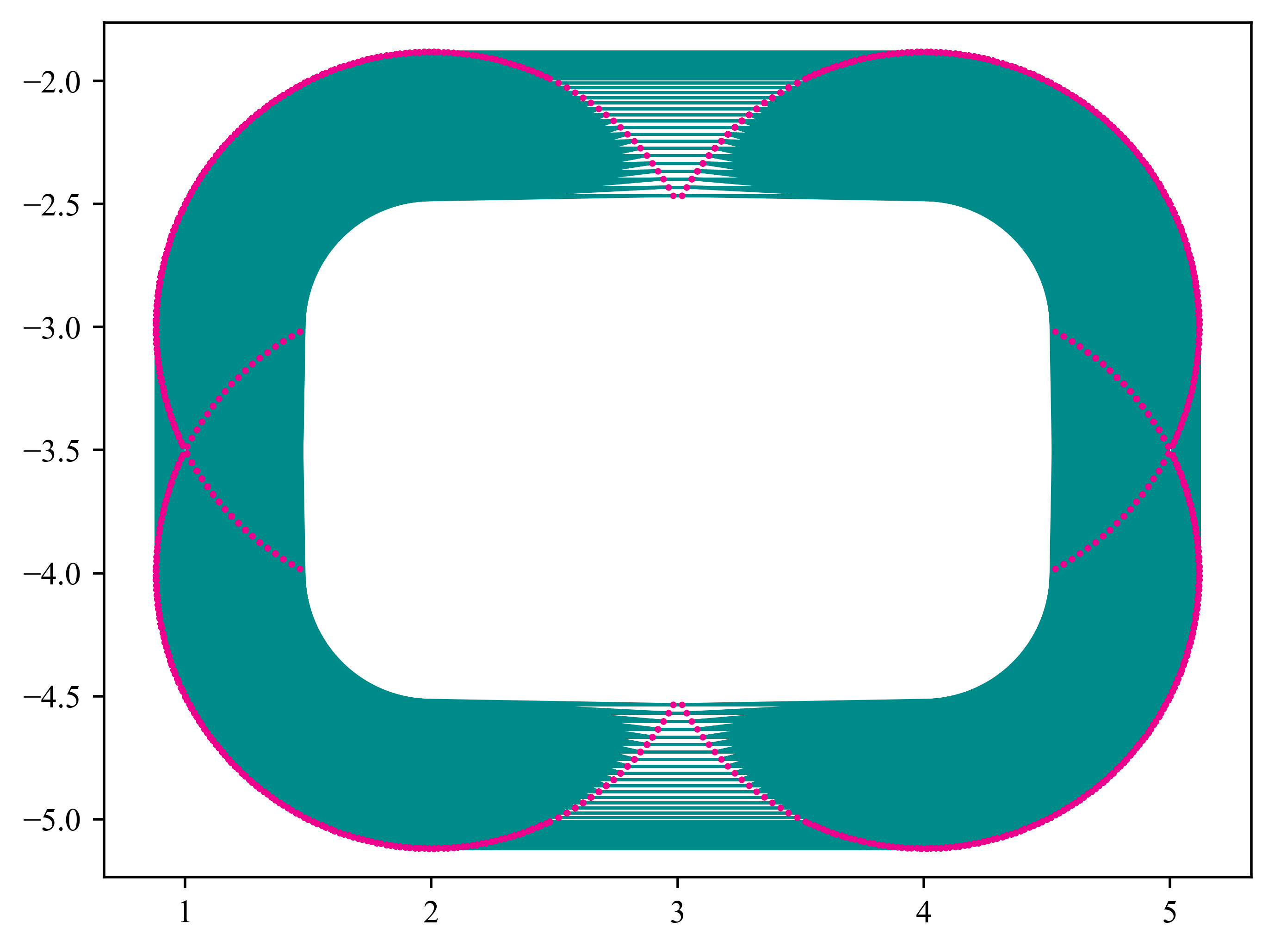}}
\caption{}
\label{fig:semi-alge:d}
\end{subfigure}
\caption{(a)-(c) The semi-algebraic sets composed of the grown obstacles from all layers with resolutions=18, 90, 180 from left to right resulted from 2 rectangles, a robot and an obstacle, shown in Fig~\ref{fig:resolution}. (d) The semi-algebraic sets projected to the $\mathbb{R}^2$ space when resolution=180.}
\label{fig:semi-alge}
\vspace{-4mm}
\end{figure}

\subsection{Vertex Propagation}
\label{subsection:propagation}
In a $\VG$, a random vertex $v$ can rotate within the range $\layerrange$ if it is not inside the grown obstacles. Thus, if a vertex is not located in any of the grown obstacles in the neighboring layers, it can freely rotate in the range of neighboring layers at exactly its own location in $\mathbb{R}^2$.
\begin{algorithm}
\begin{small}
$V=\varnothing$, 
$A = \textsc{TriangleExpansion}(v, \set{O_i})$\\
\For{$O_i\in \set{O_i}$}{
 \For{$v'\in O_i$}{
    \lIf{$v' \in A$}{
        $V = V\cup v'$
    }
 }
}
return $V, A$
\caption{\textsc{VisibilityQuery}($v, \set{O_i}$)} \label{alg:visible_vertices}
\end{small}
\end{algorithm}

\begin{algorithm}
\begin{small}
\caption{\textsc{BuildLayer}($P_r, \layerrange, \set{O_i}$)} \label{alg:building_layers}
  find the bounding polygon $P$ when $P_r$ is rotated to $\theta_{lb}$ and $\theta_{ub}$ respectively\\
     move $P$ to the origin, invert $P$\\
     $\{O_i'\} = \{P\ominus O_i\} $ (Minkowski difference)\\
     merge any obstacles in $\{O_i'\}$ that have intersections\\
     $V=\varnothing, E=\varnothing, C=\varnothing$\\
    \For{$O_i'\in \set{O_i'}$}
    {
        \For{$v \in O_i'$}
        {
            \If{\textsc{isReflex}($v$)}{
                 $V = V\cup \set{v}$\\
                 $V_{visible}, A = \textsc{VisibleQuery}(v, \set{O_i'})$\\
                 $C$.insert$(\set{v, A})$\\
                \For{$v'\in V_{visible}$ and \textsc{isReflex}($v'$)} 
                {
                    \If{\textsc{isBitangent}($v, v'$)}{
                         $e = (v, v')$\\
                         $V = V\cup \set{v'}$, $E = E\cup \set{e}$
                    }
                }
            }
        }
    }
     return $\Layer(\VG(V,E), C)$ 
\end{small}
\end{algorithm}
For example, Fig~\ref{fig:minkowski_sum} shows parts of the Minkowski sums (black and brown solid lines) of the {bounding polygon of the} robot at two different poses (as dashed rectangles) that are $10$ degrees apart with an obstacle (its red border is partially shown). 
In the figure, vertices $A', B$, and $B'$ can rotate in both layers.
{To merge $\VG$s across the layers, for each vertex $v(x,y, \layerrange)$ in the current layer, the connection with another vertex $v'(x', y', [\theta_{lb}', \theta_{ub}'])$ from the neighboring layers will be established if: (1) $v'$ falls into the visible area of $v$, (2) The edge $(v, v')$ is bitangent,} e.g., vertex $A'$ and $B$ as well as vertex $B$ and $B'$ in Fig~\ref{fig:minkowski_sum} can be connected. 
However, if $v$ and $v'$ are directly connected by a line segment, the translation and rotation are coupled, meaning there is no guarantee of collision-free rotation at positions other than $(x', y')$. To address this, we decouple translation and rotation by introducing a substitute vertex $v''(x', y', \layerrange)$ for $v'$. A two-fold connection between $v$ and $v'$ is then established by connecting $(v, v'')$ and $(v', v'')$, while ensuring that the total connection cost remains the same as the direct connection.

\begin{figure}[h!]
  \begin{center}
    \includegraphics[width=0.225\textwidth]{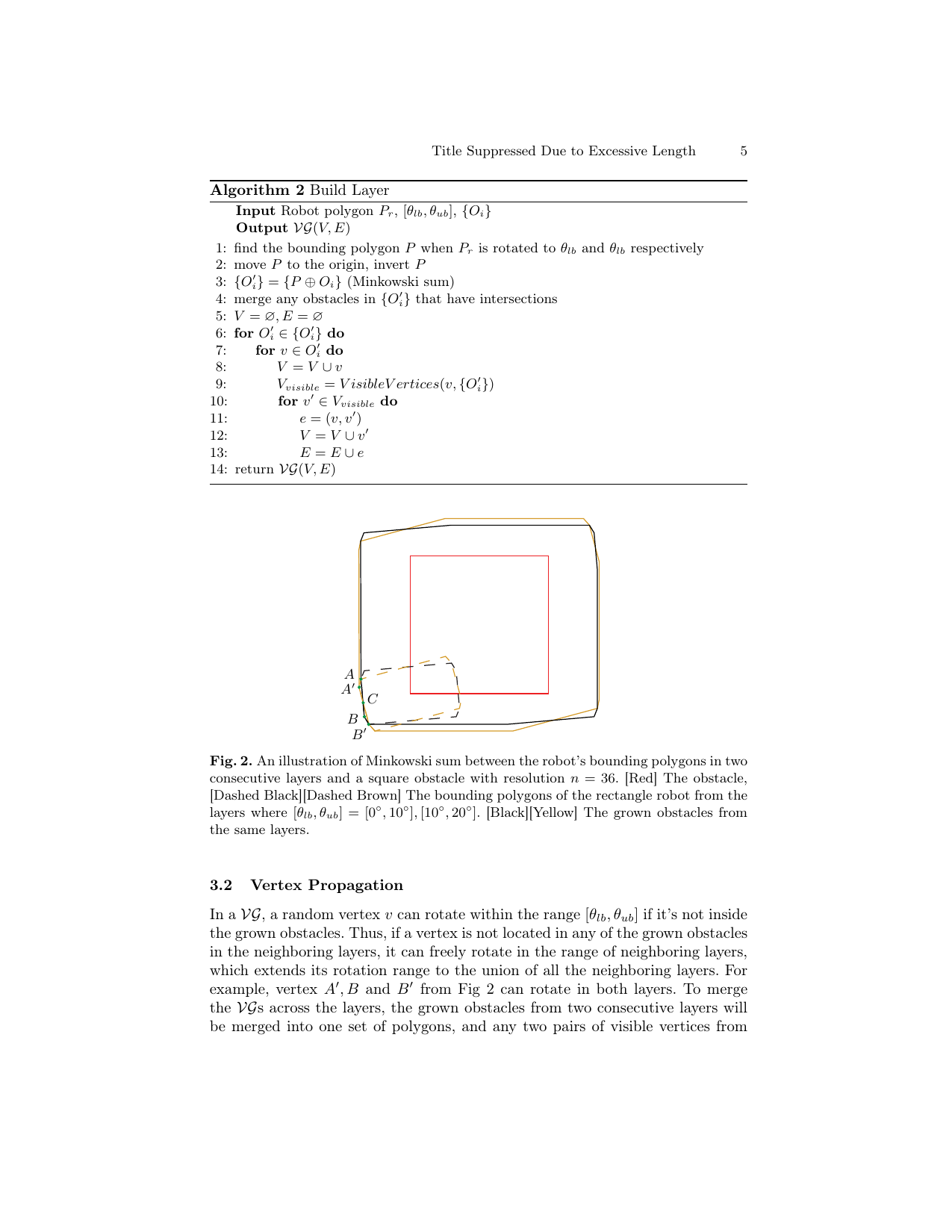}
  \end{center}
\vspace{-3mm}
  \caption{\label{fig:minkowski_sum}Intersections of Minkowski sum boundaries (solid black and brown lines) of the robot at two poses (dashed lines) and an obstacle (partial border showing in red).}
\end{figure}
Such propagation can be divided into two passes - a \emph{forward pass} and a \emph{backward pass}. The  \emph{forward pass} iteratively propagates vertices from $\Layer_i$ to $\Layer_{i+1}$, and the \emph{backward pass} propagates vertices from $\Layer_{i+1}$ to $\Layer_i$. In this way, each vertex can be propagated as far as possible. 
To avoid redundant propagation in the backward pass, we use a dictionary $D[\ldots]$ to cache information from the forward process. 
Each key is a layer index $i$, mapping to a set of tuples $(i', j)$, where $i'$ is the layer where the vertex was generated, and $j$ is its index in that layer. In the \emph{backward pass}, if $i' < i$, the vertex originates from a lower, already-checked layer, making further checking unnecessary.

\begin{algorithm}
\def\next{(l+1)\mod n}
\def\prev{(l-1)\mod n}
\begin{small}
\vspace{0.025in}
     $V = \bigcup_{i=1\cdots n} V_i$, $E = \bigcup_{i=1\cdots n}E_i$, $C=\bigcup_{i=1\cdots n}C_i$\\
     $D=\set{1:\set{(1, 0), \cdots, (1, \mid V_1\mid)}, \cdots, n:\set{(n, 0), \cdots, (n, \mid V_n\mid)}}$\\
    \Comment{The forward pass}
    \For{$l$ in $1 \cdots n$}  
    {
        $next=\next$\\
        \For{$(i, j)$ in $D[l]$}
        {
             $v = V_{i}[j]$\\
            \For{$(i', j')$ in $D[next]$}
            {
                 $v'=V_{i'}(j')$\\
                \If{\textsc{isBitangent}$(v, v')$ and $v\in C[v']$}{
                     $v''=(v.x, v.y, \layerrange_{(next)})$\\
                     $e_0=(v, v''), e_1=(v'', v')$\\
                     $V = V\cup v''$, $E = E \cup \set{e_0, e_1}$\\
                     $D[l] = D[l] \cup \set{(i', j')}$\\
                     $D[next]=D[next]\cup \set{(i, j)}$\\
                }
            }
        }
    }
    \Comment{The backward pass}
    \For{$l$ in $n \cdots 1$} {
        $prev=\prev$\\
        \For{$(i, j)$ in $D[l]$}
        {
             $v = V_{i}[j]$\\
            \For{$(i', j')$ in $D[prev]$}
            {
                \lIf{$i' < l$}{continue} 
                 $v'=V_{i'}(j')$\\
                \If{\textsc{isBitangent}$(v, v')$ and $v\in C[v']$}{
                     $v''=(v.x, v.y, \layerrange_{(prev)})$\\
                     $e_0=(v, v''), e_1=(v'', v')$\\
                     $V=V\cup v''$, $E = E \cup \set{e_0, e_1}$\\
                     $D[l] = D[l] \cup \set{(i', j')}$\\
                     $D[prev]=D[prev]\cup \set{(i, j)}$\\
                }
            }
        }
    }
     return $\RVG(V, E)$\\
\vspace{0.025in}
\caption{\small{\textsc{VertexPropagation}($n$ $\Layer$s)}} \label{alg:propagation}
\end{small}
\end{algorithm}
\vspace{-3mm}
\subsection{Searching for Optimal Paths on $\RVG$}
Given the start $s(x_s, y_s, [\theta_{lb}^s, \theta_{ub}^s])$ and the goal $g(x_g, y_g, [\theta_{lb}^g, \theta_{ub}^g])$, Alg.~\ref{alg:add_start_and_goal} is used to add $s$ and $g$ into the $\RVG$ by connecting the visible vertices in each layer that has overlap with $[\theta_{lb}^s, \theta_{ub}^s]$ and $[\theta_{lb}^g, \theta_{ub}^g]$.
In practice, the start and goal generally have a fixed rotation, where we can simply set $\theta_{lb}^s=\theta_{ub}^s$ and $\theta_{lb}^g=\theta_{ub}^g$.
Then, an optimal solution can be found according to the cost metric Eq.~\eqref{eq:cost}. During the search, each vertex $v$'s rotation is represented by the mean value of its $\layerrange$. Since building $\RVG$ doesn't rely on the start and the goal, the same $\RVG$ can be used for multiple queries.
\begin{algorithm}
\begin{small}
\vspace{0.025in}
    \For{$l$ in $1\cdots n$}
    {
        \If{$[\theta_{lb}^s, \theta_{ub}^s]$ overlaps with $[\theta_{lb}^l, \theta_{ub}^l]$}
        {
             $V = V \cup \set{s}$ if $s$ not added\\
             $V_{visible}, A = \textsc{VisibleQuery}(s, O_i)$\\
            \lFor{$(i, j) \in D[l]$}{$v=V_i(j), E = E \cup \set{(s, v)}$}
        }
        \If{$[\theta_{lb}^g, \theta_{ub}^g]$ overlaps with $[\theta_{lb}^l, \theta_{ub}^l]$}
        {
             $V = V \cup \set{g}$ if $g$ not added\\
             $V_{visible}, A = \textsc{VisibleQuery}(q, O_i)$\\
            \lFor{$(i, j)\in D[l]$}{$v=V_i(j), E = E \cup \set{(g, v)}$}
        }
    }
\vspace{0.025in}
\caption{\small{\textsc{AddStartGoal}($s, g$)}} \label{alg:add_start_and_goal}
\end{small}
\end{algorithm}


\subsection{Formal Guarantees}
Due to page limits, we state the key properties of \ours here. The full proofs and properties of \ours will be detailed in an extended journal version.
%
Let $n$ denote the resolution, $m$ the total number of vertices/edges, 
and $k$ denote the number of obstacles,
 including the obstacles and the robot. \ours has a $O(nm^2k)$ complexity. 
\vspace{-2mm}
\begin{theorem}
Assuming $\delta$-clearance, \ours is resolution complete. Moreover, \ours computes asymptotically optimal solutions, if the overestimate of the rotation range converges to the true rotation range as the rotational resolution goes to infinity.
\end{theorem}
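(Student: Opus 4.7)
The plan is to prove both claims via a single construction: given any feasible path $\tau$ satisfying $\delta$-clearance, I would build an approximating path $\tau_n$ on the $\RVG$ at resolution $n$ and show that $J(\tau_n)\to J(\tau)$ as $n \to \infty$. Applying this to an optimal $\tau^*$ would yield asymptotic optimality, while the mere existence of such a $\tau_n$ for sufficiently large $n$ would yield resolution completeness.

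First, I would fix a $\delta$-clearance representative $\tau'$ of $\tau$ (guaranteed by the hypothesis) and partition $SO(2)$ into $n$ equal slices. Using continuity of the orientation function $\theta(t)$, I would split $[0,1]$ into sub-intervals $[t_{i-1},t_i]$ such that on each one the orientation lies entirely within a single layer $\Layer_{l_i}$, with consecutive indices $l_i$ neighboring. Taking $n$ large enough that $2\pi/n<\delta_{\theta}$, the hypothesized convergence of the overestimated rotation range to the true swept region guarantees that the planar projection of $\tau'$ on each sub-interval is collision-free with respect to the grown obstacles $\{O_j'\}$ of the corresponding layer. This reduces the problem on each slice to a purely planar visibility-graph problem.

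Next, within layer $l_i$ the reduced visibility graph contains a piecewise-linear path $\pi_i$ joining reflex vertices near $\tau'(t_{i-1})$ and $\tau'(t_i)$ whose Euclidean length is at most the projected arc length of $\tau'$ on that sub-interval plus an error that vanishes as $n\to\infty$, because Euclidean shortest paths among polygonal obstacles are realized by chains of bitangents through reflex vertices. At each transition time $t_i$, the $\delta$-clearance ball around $\tau'(t_i)$ lies in the intersection of the free $\mathbb R^2$ regions of both adjacent layers, so Alg.~\ref{alg:propagation} links a nearby reflex vertex across $\Layer_{l_i}$ and $\Layer_{l_{i+1}}$ via the substitute-vertex mechanism, at a rotation cost of exactly one slice width $2\pi/n$. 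Concatenating the $\pi_i$'s with these propagation edges produces $\tau_n$ on $\RVG$. The cost $J(\tau_n)$ then differs from $J(\tau')$ by a planar term of order $\alpha\cdot O(K/n)$, where $K$ is the number of slice transitions, plus a rotation-discretization term of at most $\beta\cdot 2\pi/n$ above the total variation of $\theta(t)$; both vanish in the limit, so $J(\tau_n)\to J(\tau')$ and, choosing $\tau=\tau^*$, $J(\tau_n)\to J^*_{q_0,q_1}$.

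The hardest step will be verifying the cross-layer propagation at every transition, i.e., that near each $\tau'(t_i)$ there genuinely exists a reflex vertex within distance $O(1/n)$ that is simultaneously valid and bitangently linkable in both adjacent layers. This reduces to showing that the reflex vertices of the grown obstacles densely trace the boundary of $\mathcal C_{free}$ in the limit — a claim that should follow from the assumed convergence of the rotation overestimate combined with $\delta$-clearance, since any local excursion of the boundary of $\{O_j'\}$ is bridged by a reflex vertex whose substitute in the neighboring layer lies in the overlap region guaranteed by clearance. Some extra care will be needed when $\tau'$ is tangent to the boundary of $\mathcal C_{free}$ over an interval rather than crossing it transversally, which can be handled by a small local deformation of $\tau'$ inside its $\delta$-clearance tube before invoking the visibility-graph approximation.
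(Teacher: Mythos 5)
Your route---fix a $\delta$-clear representative of the optimal path, split it by orientation slices, approximate each slice by a planar visibility-graph path, and stitch the slices together with propagation edges---is genuinely different from the paper's, which instead rests on a structural claim: an optimal $SE(2)$ solution is a chain of bitangent connections between reflex vertices of the (swept-volume) obstacles, and $\RVG$ approximates exactly those vertices and connections. That difference matters because the structural claim is precisely what is missing at the weakest point of your argument, the cross-layer transition step. A $\delta$-clear near-optimal path may execute its rotations anywhere in $\mathcal C_{free}$, including far from every obstacle, whereas Alg.~\ref{alg:propagation} only permits a layer change at a propagated vertex, i.e., at the planar location of a reflex vertex of some layer's grown obstacles. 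Your proposed repair---that reflex vertices of the grown obstacles ``densely trace the boundary of $\mathcal C_{free}$ in the limit''---does not address this: the transition points of $\tau'$ need not lie anywhere near $\partial\mathcal C_{free}$, so the nearest propagated vertex can be at distance $\Theta(1)$ rather than $O(1/n)$, and the induced detour does not vanish as $n\to\infty$. Closing this requires the deformation lemma the paper implicitly invokes (and also does not prove): any feasible path can be retracted, without increasing the cost in Eq.~\eqref{eq:cost}, to one whose rotations occur only at (limits of) reflex vertices of the grown obstacles.

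There is also an accounting error. Even granting a propagated vertex within $O(1/n)$ of every transition, a path whose orientation has total variation $V$ crosses $K\approx Vn/2\pi$ layer boundaries, so a per-transition planar detour of $O(1/n)$ sums to $\alpha\cdot O(K/n)=\alpha\cdot O(V)$, a constant that does not vanish. The bookkeeping must instead exploit the fact that a vertex propagated through consecutive layers produces a chain of substitute vertices at the \emph{same} planar location, so a contiguous rotation through many layers incurs one planar detour, not $K$ of them; the error should be charged per rotation episode, not per layer crossing. Finally, note that the clearance assumption only guarantees \emph{some} $\delta$-clear $\tau'$ homotopic to $\tau^*$, not one with $J(\tau')$ close to $J^*_{q_0,q_1}$; your first step silently assumes the latter, which needs either a strengthened assumption or a separate limiting argument in $\delta$.
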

\begin{proof}[Proof sketch]
\vspace{-3mm}
An optimal solution consists of connections between bitangent reflex vertices on the surface of obstacles in $SE(2)$, where there are infinite ways to connect the vertices if they have the same lowest cost.
To see that \ours is asymptotically optimal, under the $\delta$-clearance assumption, \ours builds an $\RVG$ that, at sufficiently high resolution, can approximate all the reflex vertices if the overestimate of the rotation range converges to the true rotation range. Any two bitangent reflex vertices can be connected in $\RVG$ with the equivalent cost.
Resolution completeness follows asymptotic optimality. 
\end{proof}


\section{Computationl Evaluation}\label{sec:evaluation}
%

We developed \ours with the hope that it would become a practical tool. As such, much effort has been devoted to efficiently implementing \ours in C++, to be open-sourced. \ours leverages CGAL \cite{cgal:eb-24a} for two-dimensional geometrical computations and polygon operations. Computational evaluations were performed on an Intel i9-14900K CPU with a single CPU core/thread. 

Qualitative evaluations are first provided, highlighting \ours's behavior as the resolution and the cost structure change. These qualitative results corroborate \ours's the correctness of \ours. 
Then, extensive quantitative evaluations, including full \ours performance characterization under different resolutions and performance comparisons with state-of-the-art sampling-based algorithms, confirm the superior performance of \ours, on both computational speed and solution quality fronts. Given limited pages, we mainly work with the cost metric where $\alpha = 1$ and $\beta = 0$ in Eq.~\eqref{eq:cost}. Note that even when we set $\beta = 0$, it is generally undesirable for the robot to execute unnecessary rotations because doing so will often add to the path's total length. 

\subsection{Qualitative Behavior of \ours}
\subsubsection{Effects of Varying Resolutions} \ours is an \emph{anytime} algorithm, which finds a solution quickly. As the resolution increases, the solution quality improves. The experiment illustrated in Fig.~\ref{fig:resolution} demonstrates \ours's such behavior. In the experiment, a rectangular robot is asked to go from the left to the right. At $n=36$, the robot can only find a path above the large obstacle with $0.031s$ build time and $0.006s$ search time. As the resolution increases to $n=72$, the robot finds a shorter path going through the narrower passage underneath the obstacle with $0.085s$ build time and $0.021s$ search time. The corresponding $\RVG$s (projected, without the orientation dimension) are also shown.
\begin{figure}[h!]
\centering
\includegraphics[width=0.32\linewidth, trim=4.08cm 6.28cm 2.2cm 5.12cm, clip]{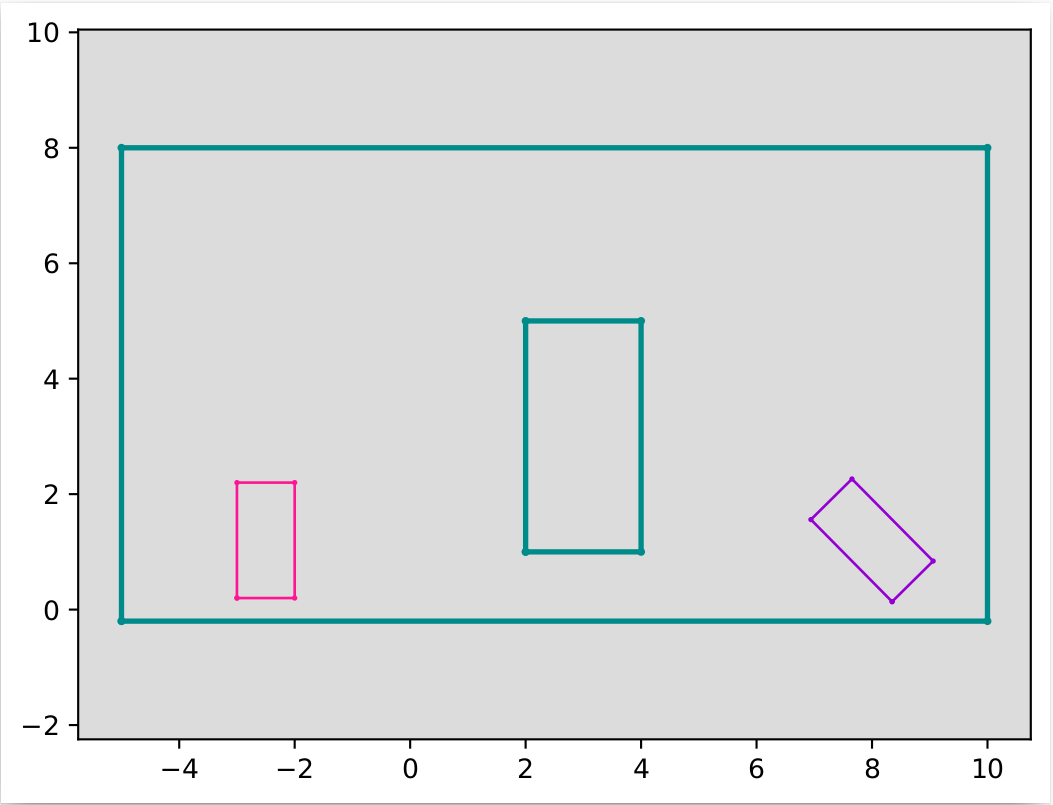}
\includegraphics[width=0.32\linewidth, trim=4.08cm 6.28cm 2.2cm 5.12cm, clip]{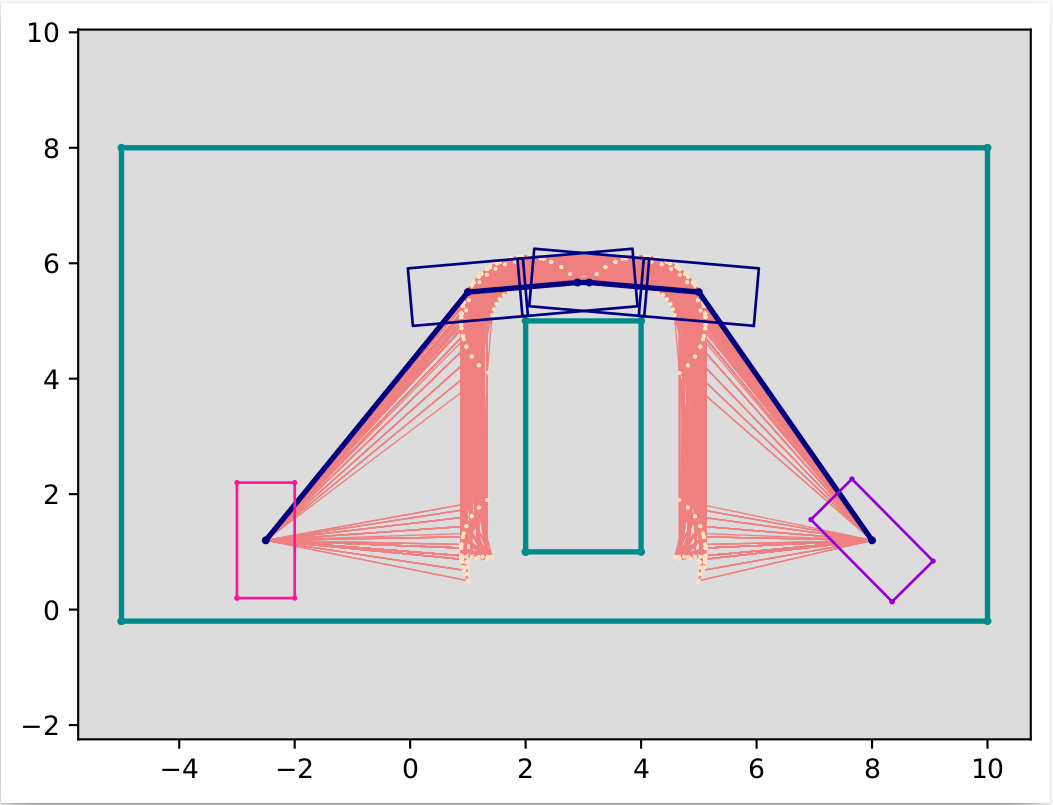}
\includegraphics[width=0.32\linewidth, trim=4.08cm 6.28cm 2.2cm 5.12cm, clip]{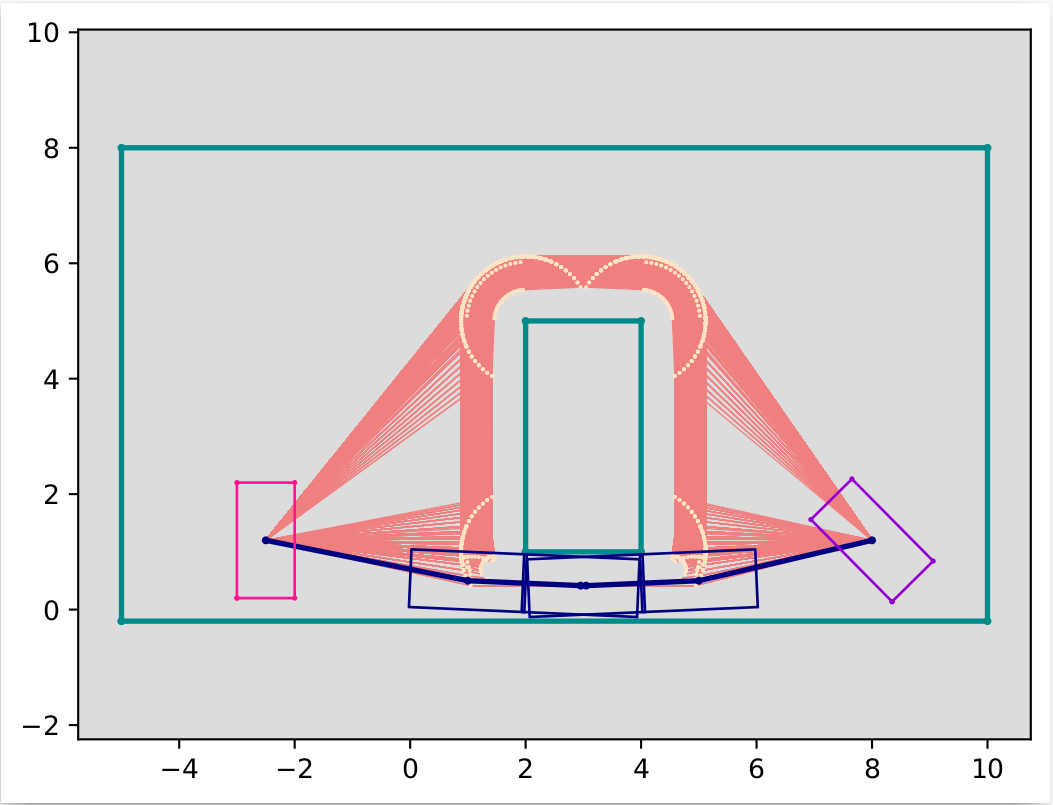}
\caption{The solutions from our algorithm with different resolutions. The pink, purple, and deep cyan rectangles represent the start configuration, goal configuration, and obstacles(including boundaries). [Left] The map. [Middle] The solution found at resolution $n=36$. [Right] The solution at resolution $n=72$, taking more time to compute.}
\label{fig:resolution}
\end{figure}

\subsubsection{Effects of Varying Cost Metrics} To be versatile, \ours should readily support different cost structures, allowing $\alpha$ and $\beta$ in Eq.~\eqref{eq:cost} to change freely. To showcase this capability, we carefully design the environment in Fig.~\ref{fig:cost-structure}. At resolution $n=36$, results with four $\alpha$ and $\beta$ combinations are given. On the top left, $\alpha = 1$ and $\beta = 0$. This forces the robot to select the shortest Euclidean distance path (in dark blue) of length $106.77$ even though the path has a large rotation cost of $48.00$ radians. When $\alpha = \beta = 0.5$ as on the top right, a path with a longer Euclidean distance of $116.34$ is chosen, with a total rotation of $18.33$ radians. The pattern continues. In the end, at $\alpha = 0$ and $\beta = 1$, the path with the least rotation (0.17 radian\footnote{The actual rotation is zero; we are conservatively reporting the range of each orientation slice. In this case it is $2\pi/36 \approx 0.17$.}) and the largest Euclidean distance ($145.69$) gets selected. On average, \ours takes $2.087s$ and $0.045s$ to build $\RVG$ and search for the optimal path. Qualitatively, \ours correctly computes the desired shortest path based on the provided cost metric. 
\begin{figure}[h!]
\vspace{2mm}
\centering
\includegraphics[width=0.45\linewidth, trim=1.452cm 1.22cm 0.649cm 2.02cm, clip]{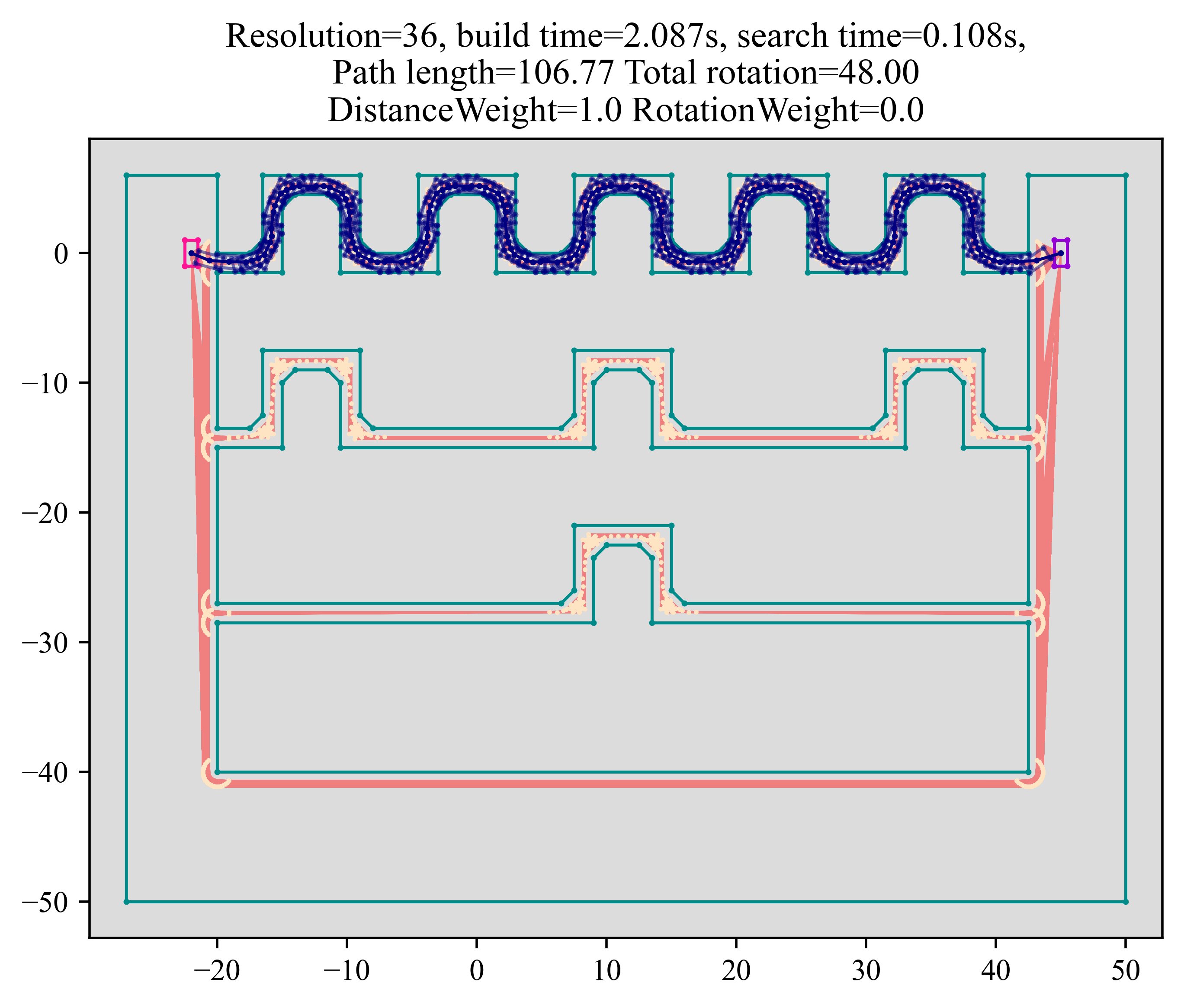}
\includegraphics[width=0.45\linewidth, trim=1.452cm 1.22cm 0.649cm 2.02cm, clip]{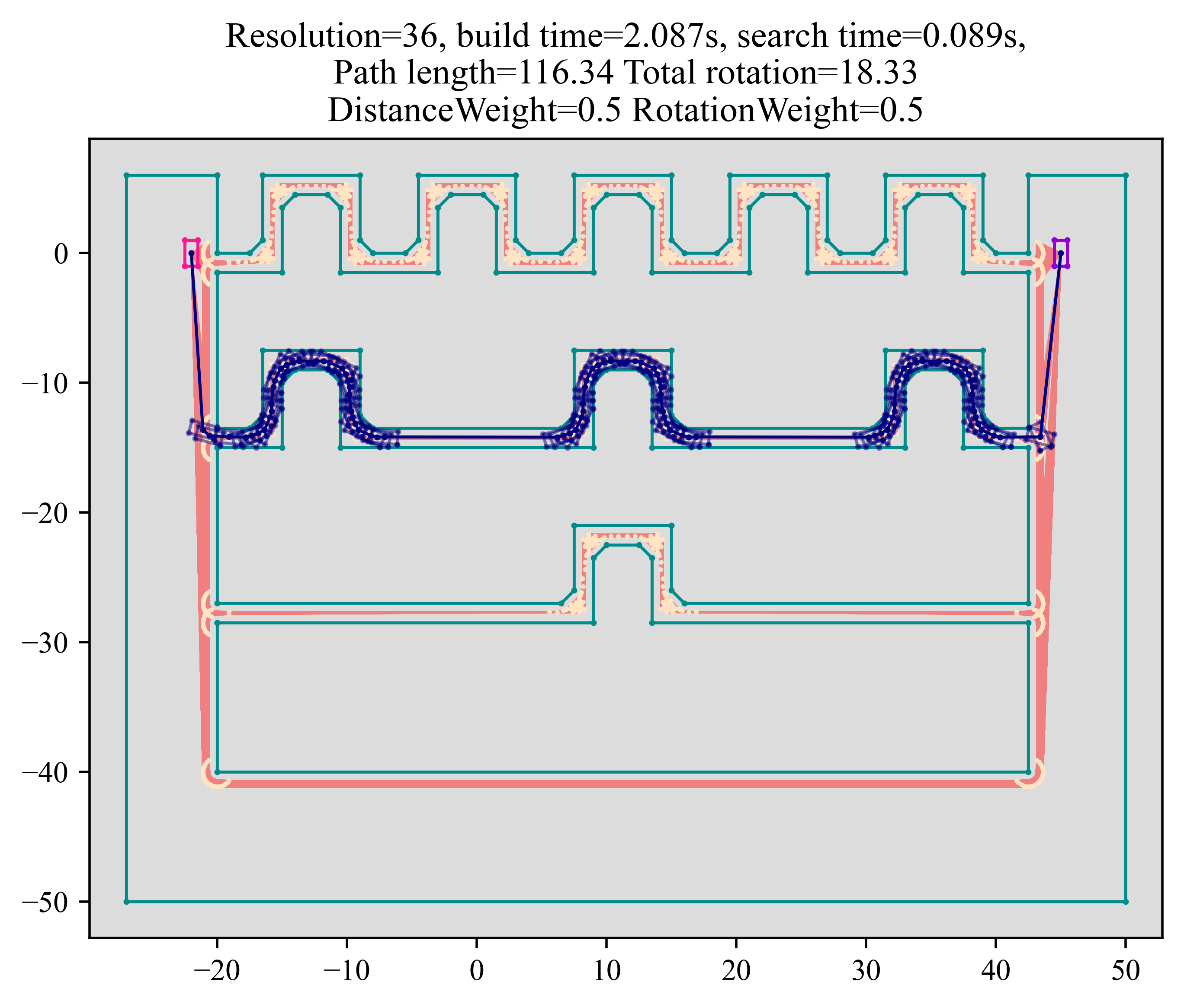} \\
\vspace{1mm}
\hspace{0.001\linewidth}
\includegraphics[width=0.45\linewidth, trim=1.452cm 1.22cm 0.649cm 2.02cm, clip]{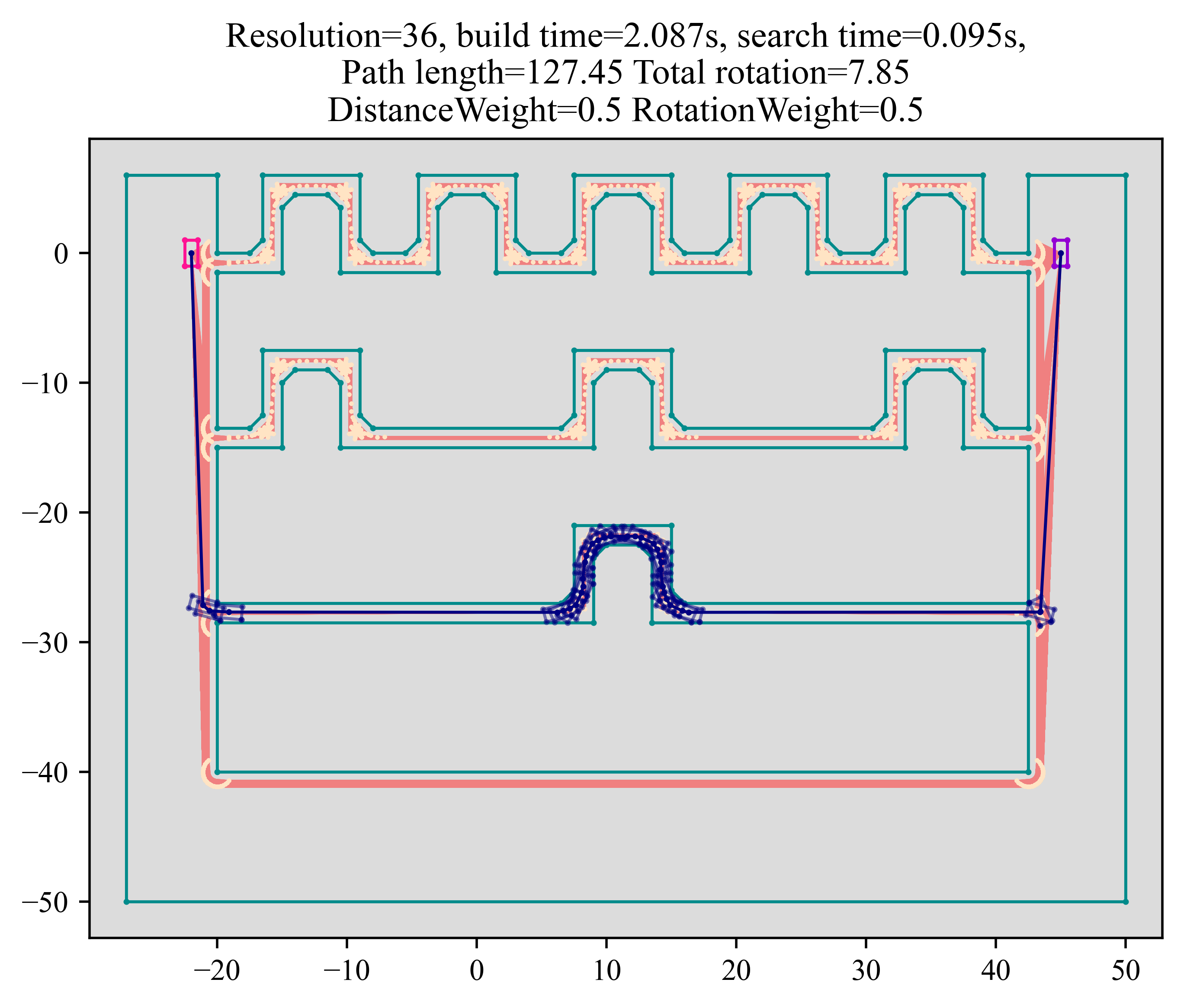}
\includegraphics[width=0.45\linewidth, trim=1.452cm 1.22cm 0.649cm 2.02cm, clip]{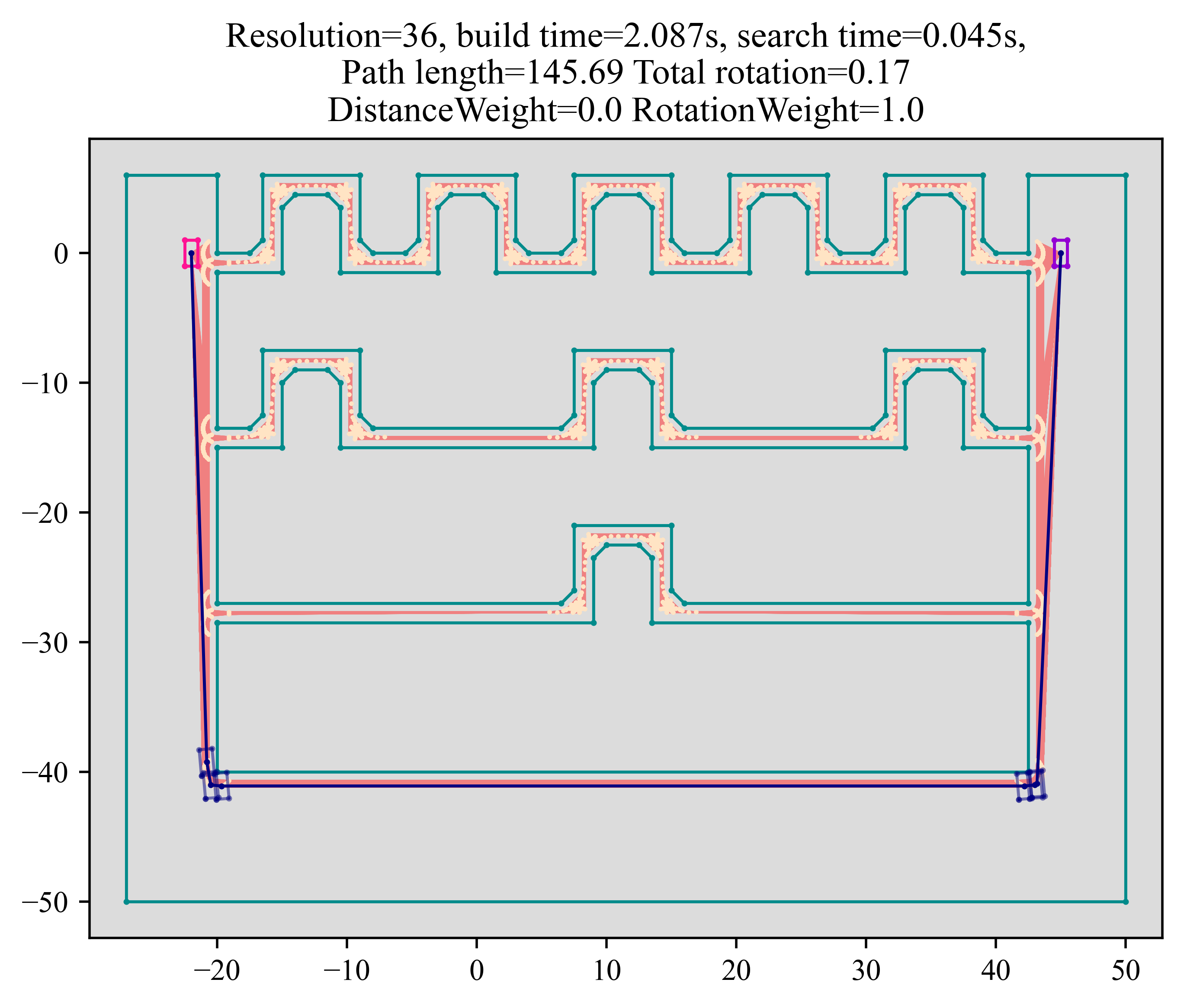}
\vspace{1mm}
\caption{The outcome of running \ours at resolution $36$ with different $\alpha$ and $\beta$ values. Both $\RVG$ and the shortest path (in blue) are shown. [Top Left] $\alpha = 1$ and $\beta = 0$. 
[Top Right] $\alpha = 0.5$ and $\beta = 0.5$. 
[Bottom Left] $\alpha = 0.48$ and $\beta = 0.52$. 
[Bottom Right] $\alpha = 0$ and $\beta = 1$.}
\label{fig:cost-structure}
\end{figure}
\begin{figure*}[!t]
\centering
\includegraphics[width=0.095\linewidth, trim=3.05cm 1.23cm 2.25cm 0.66cm, clip]{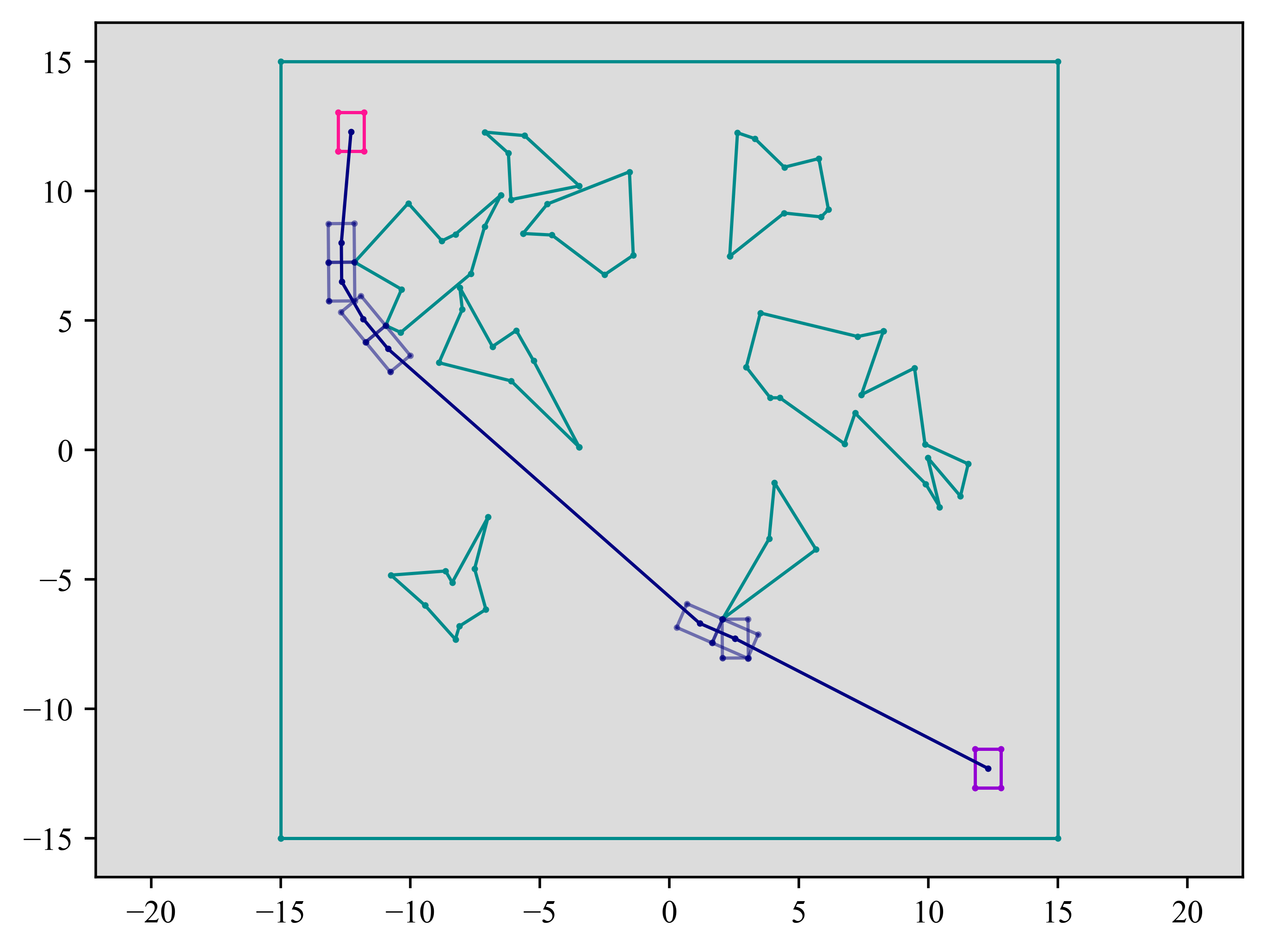}\hfill
\includegraphics[width=0.095\linewidth, trim=3.05cm 1.23cm 2.25cm 0.66cm, clip]{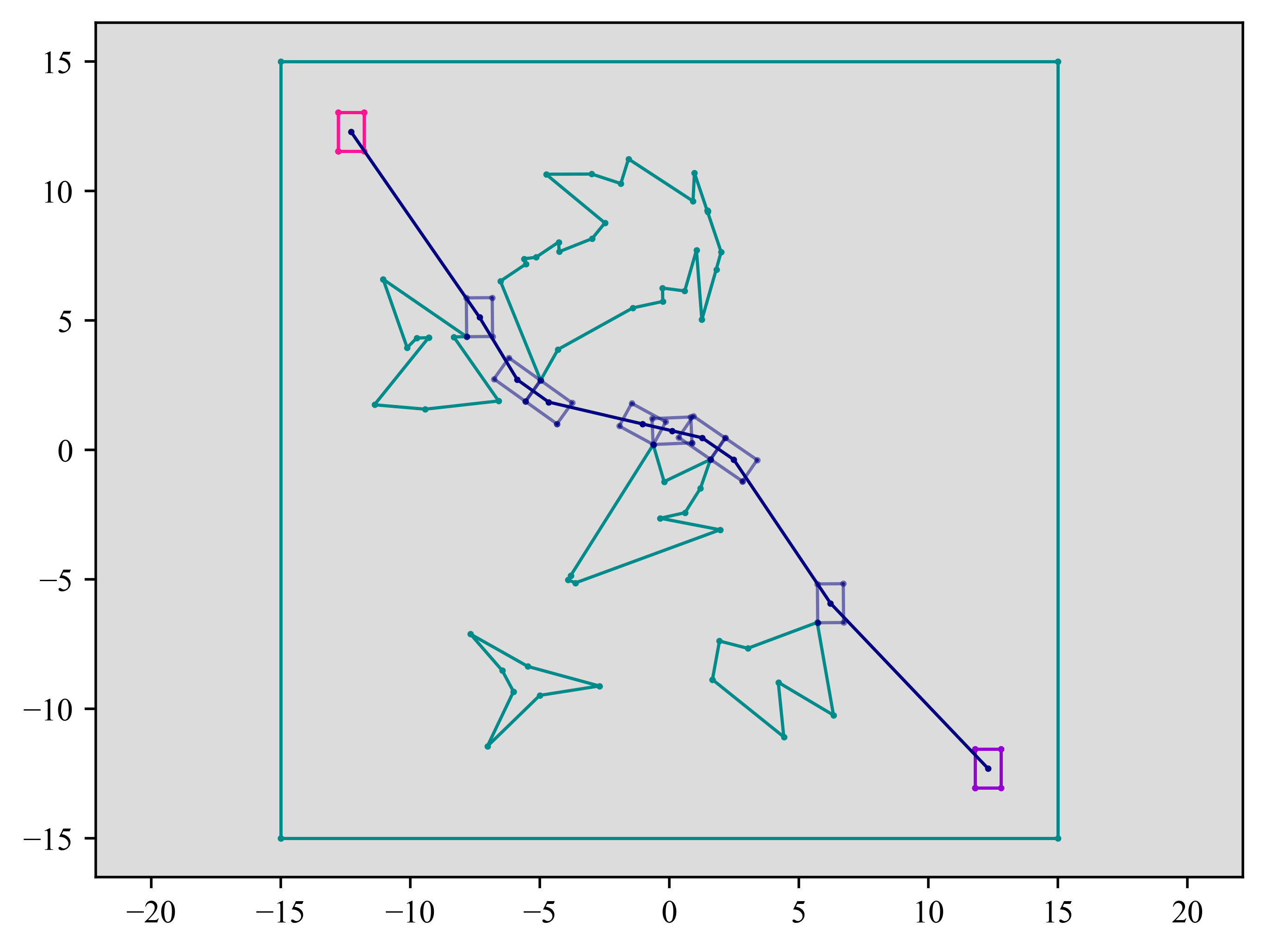}\hfill
\includegraphics[width=0.095\linewidth, trim=3.05cm 1.23cm 2.25cm 0.66cm, clip]{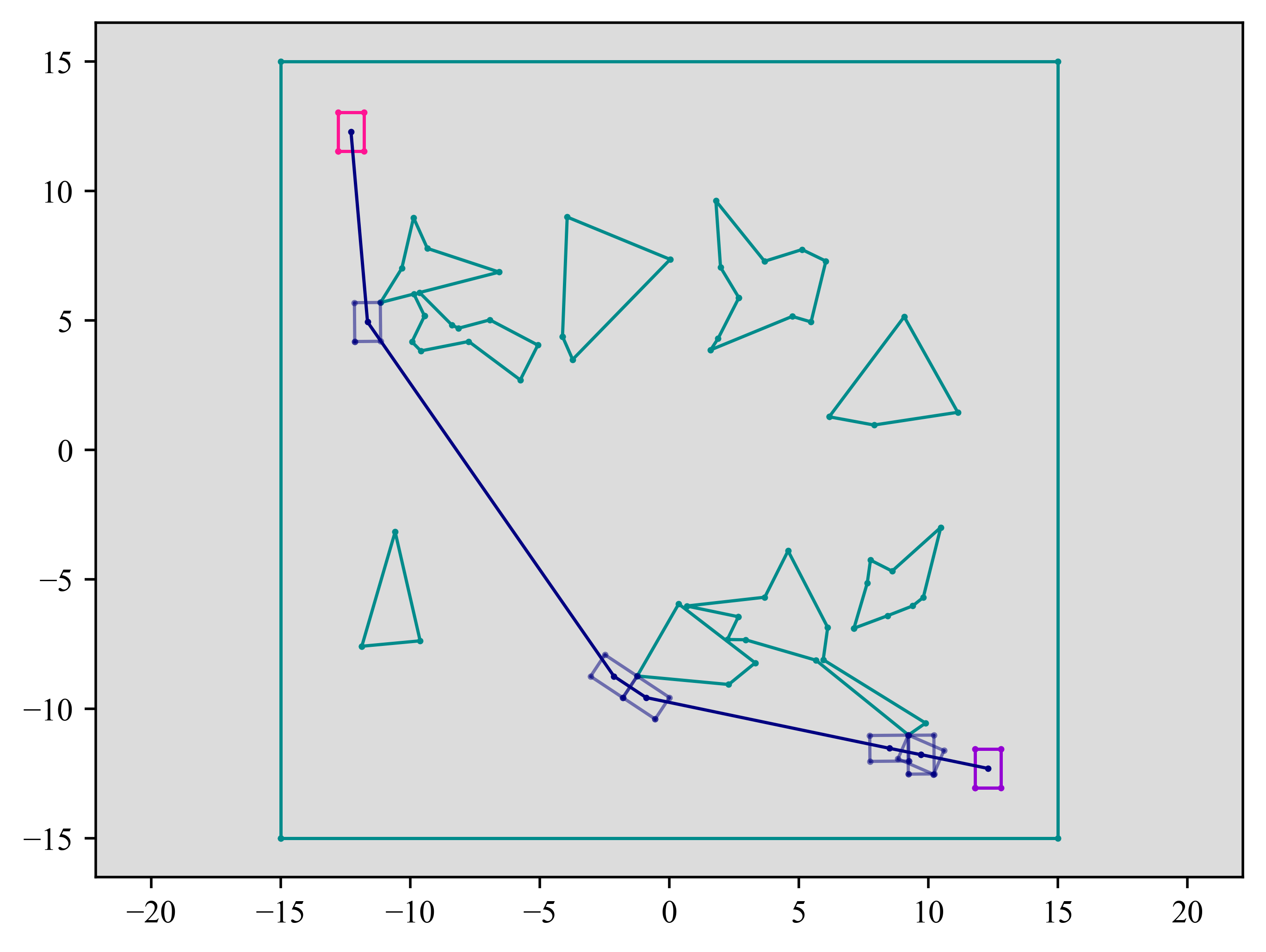}\hfill
\includegraphics[width=0.095\linewidth, trim=3.05cm 1.23cm 2.25cm 0.66cm, clip]{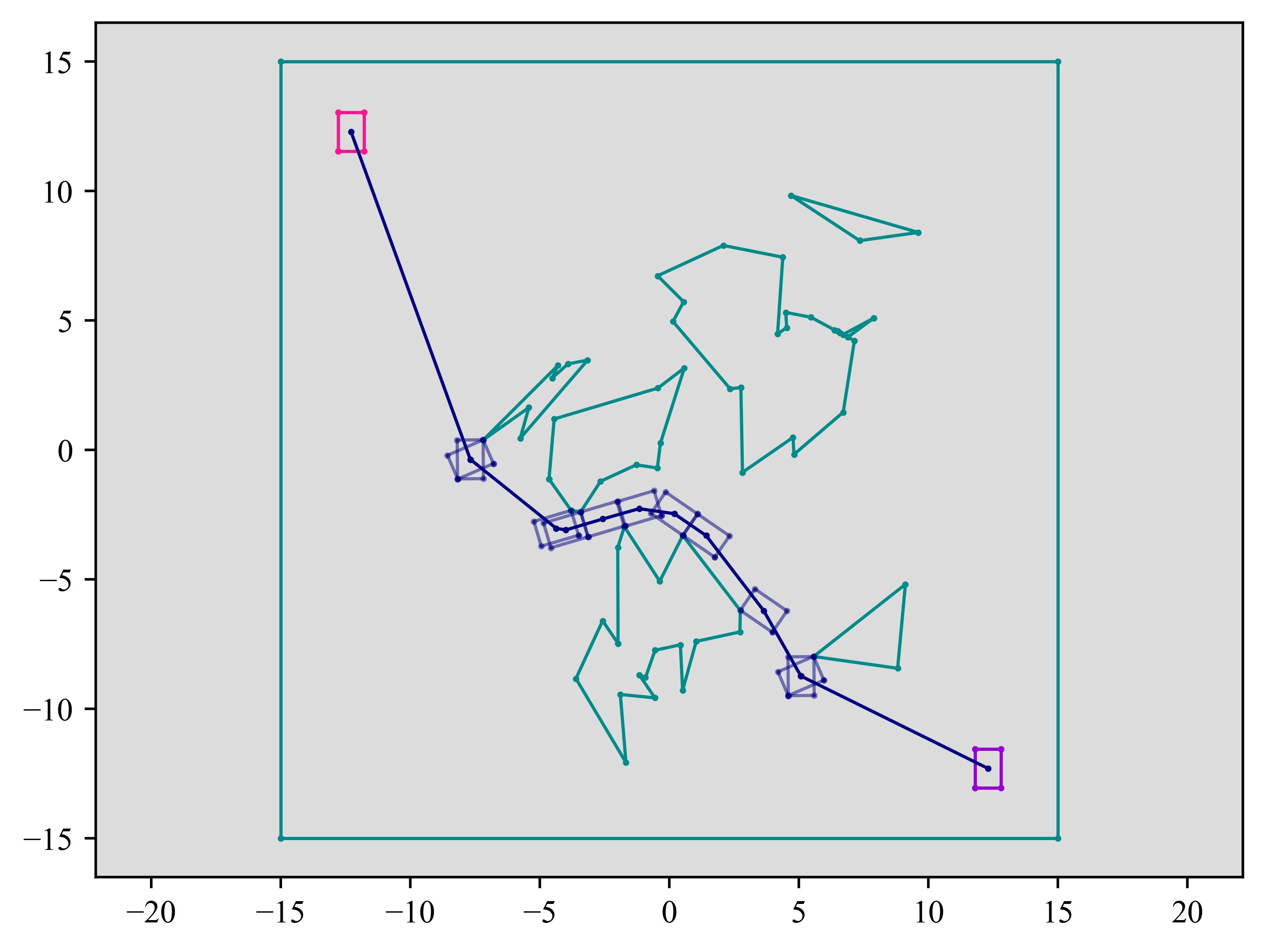}\hfill
\includegraphics[width=0.095\linewidth, trim=3.05cm 1.23cm 2.25cm 0.66cm, clip]{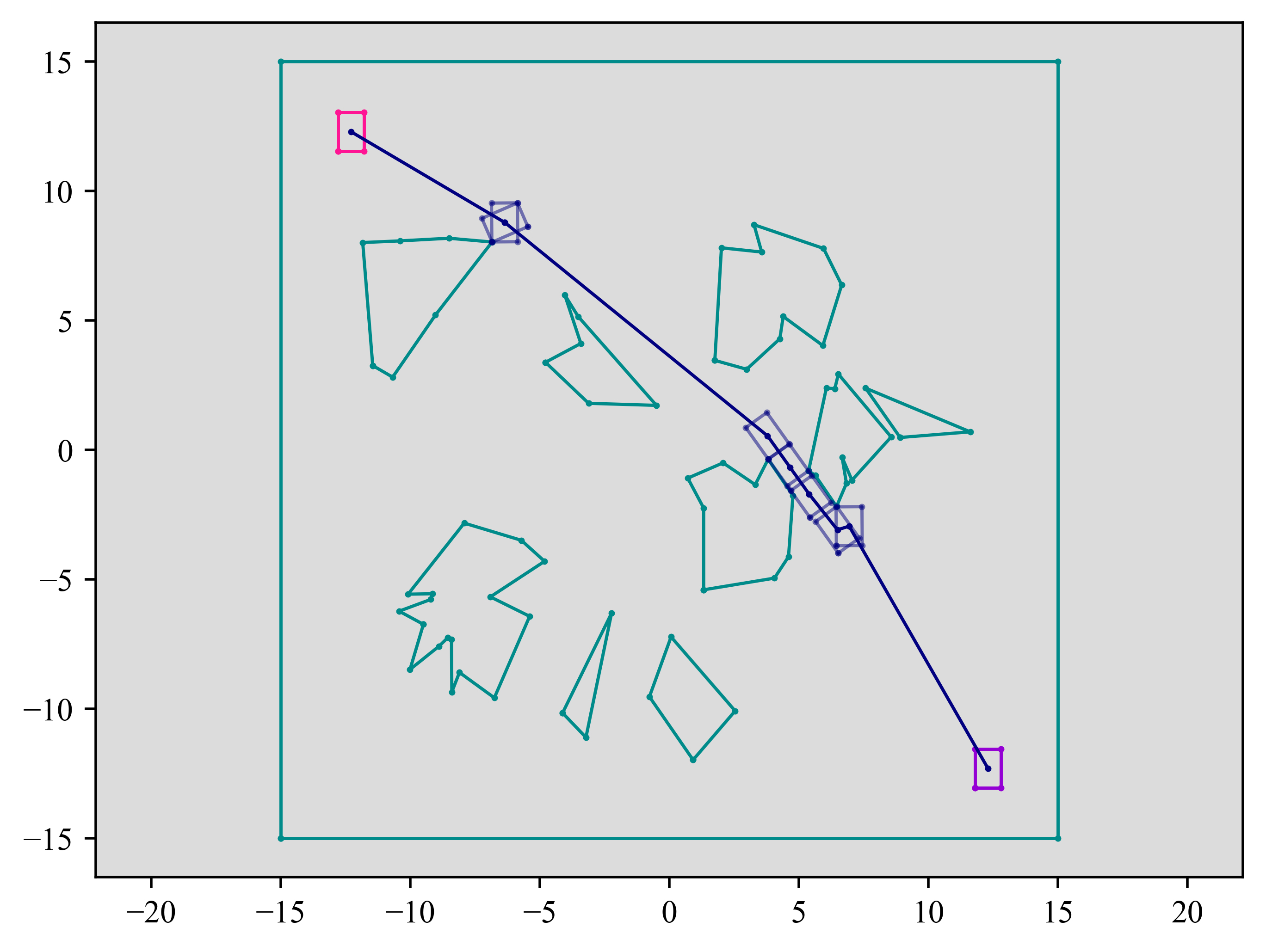}\hfill
\includegraphics[width=0.095\linewidth, trim=3.05cm 1.23cm 2.25cm 0.66cm, clip]{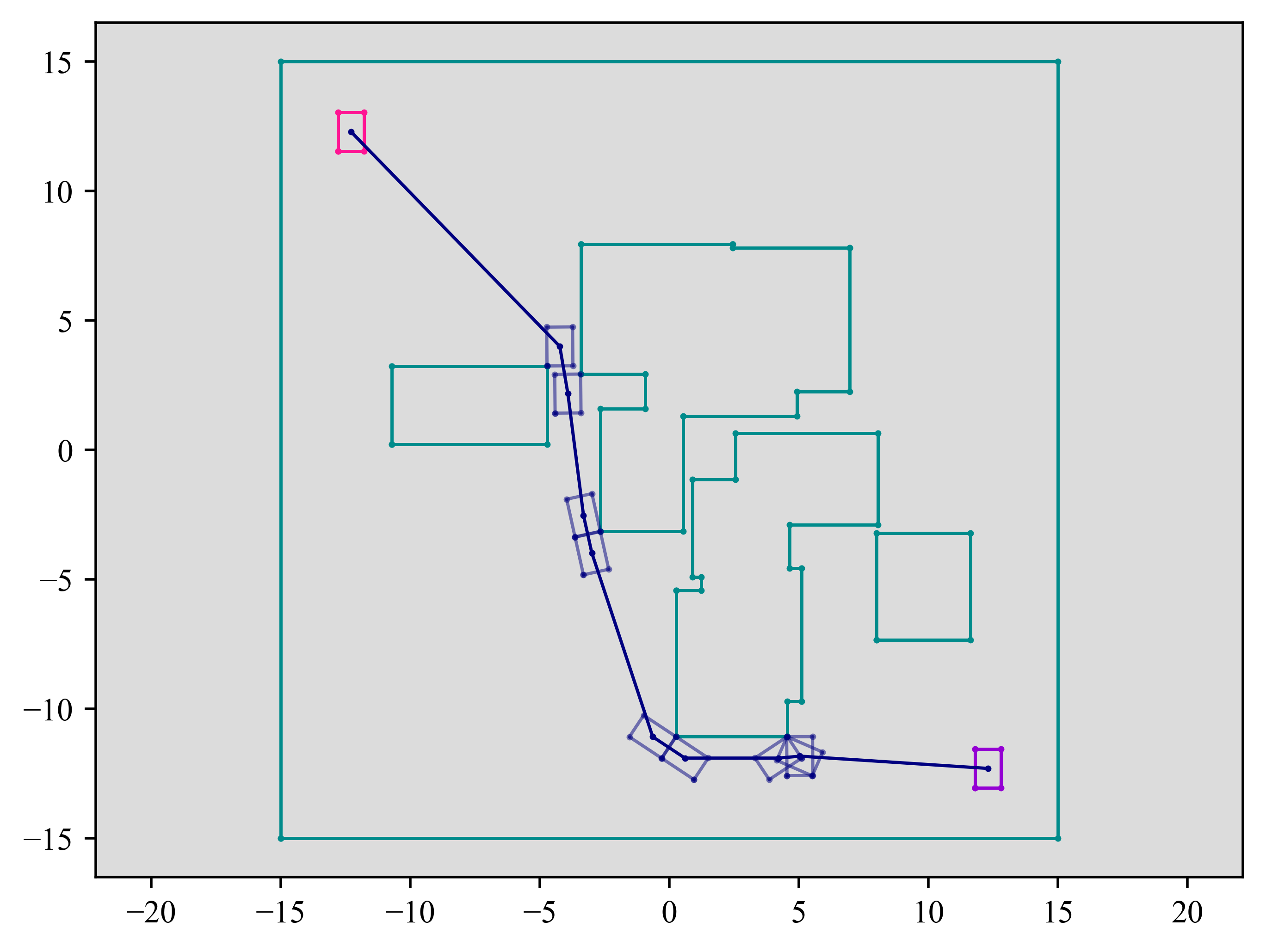}\hfill
\includegraphics[width=0.095\linewidth, trim=3.05cm 1.23cm 2.25cm 0.66cm, clip]{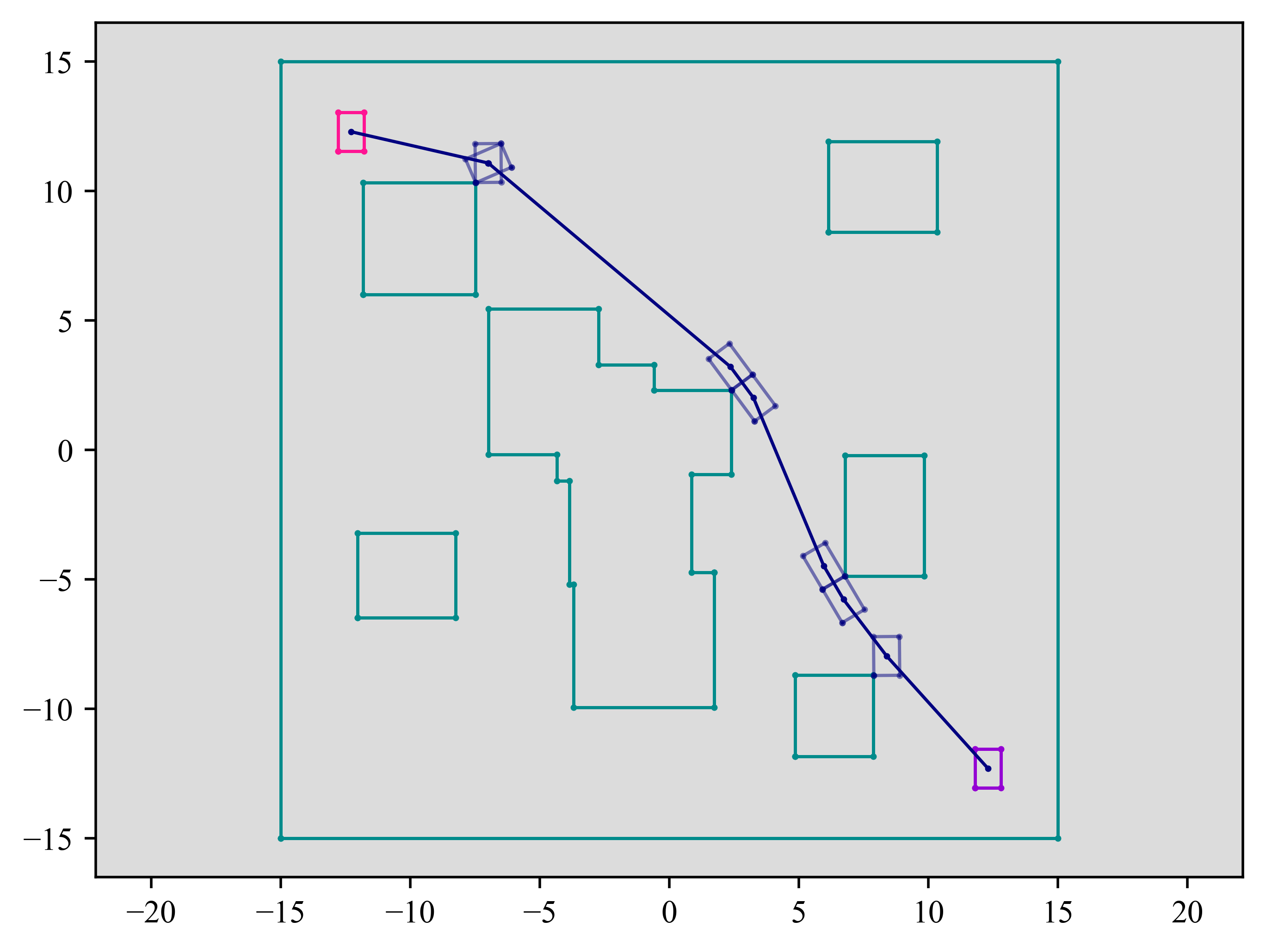}\hfill
\includegraphics[width=0.095\linewidth, trim=3.05cm 1.23cm 2.25cm 0.66cm, clip]{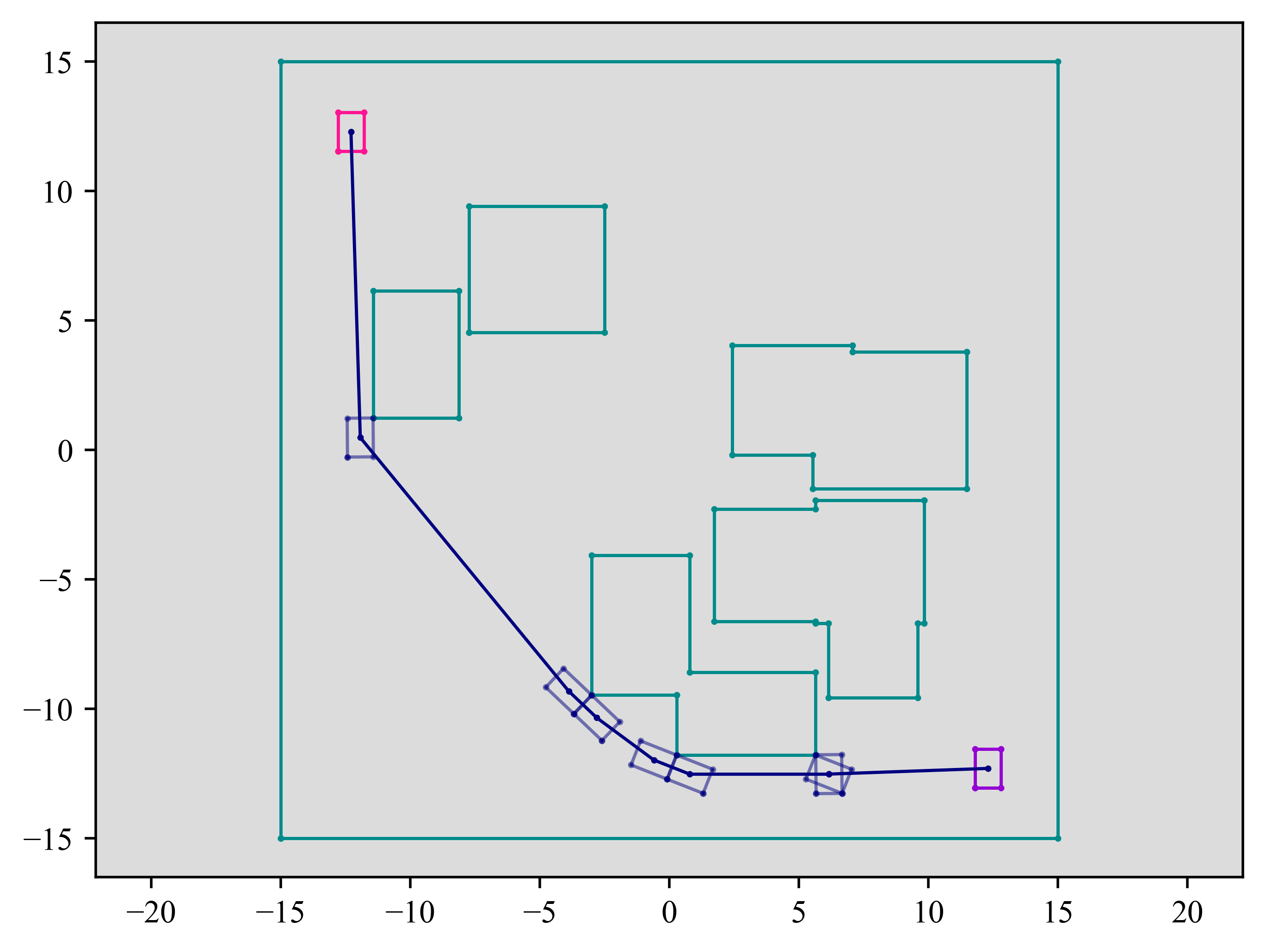}\hfill
\includegraphics[width=0.095\linewidth, trim=3.05cm 1.23cm 2.25cm 0.66cm, clip]{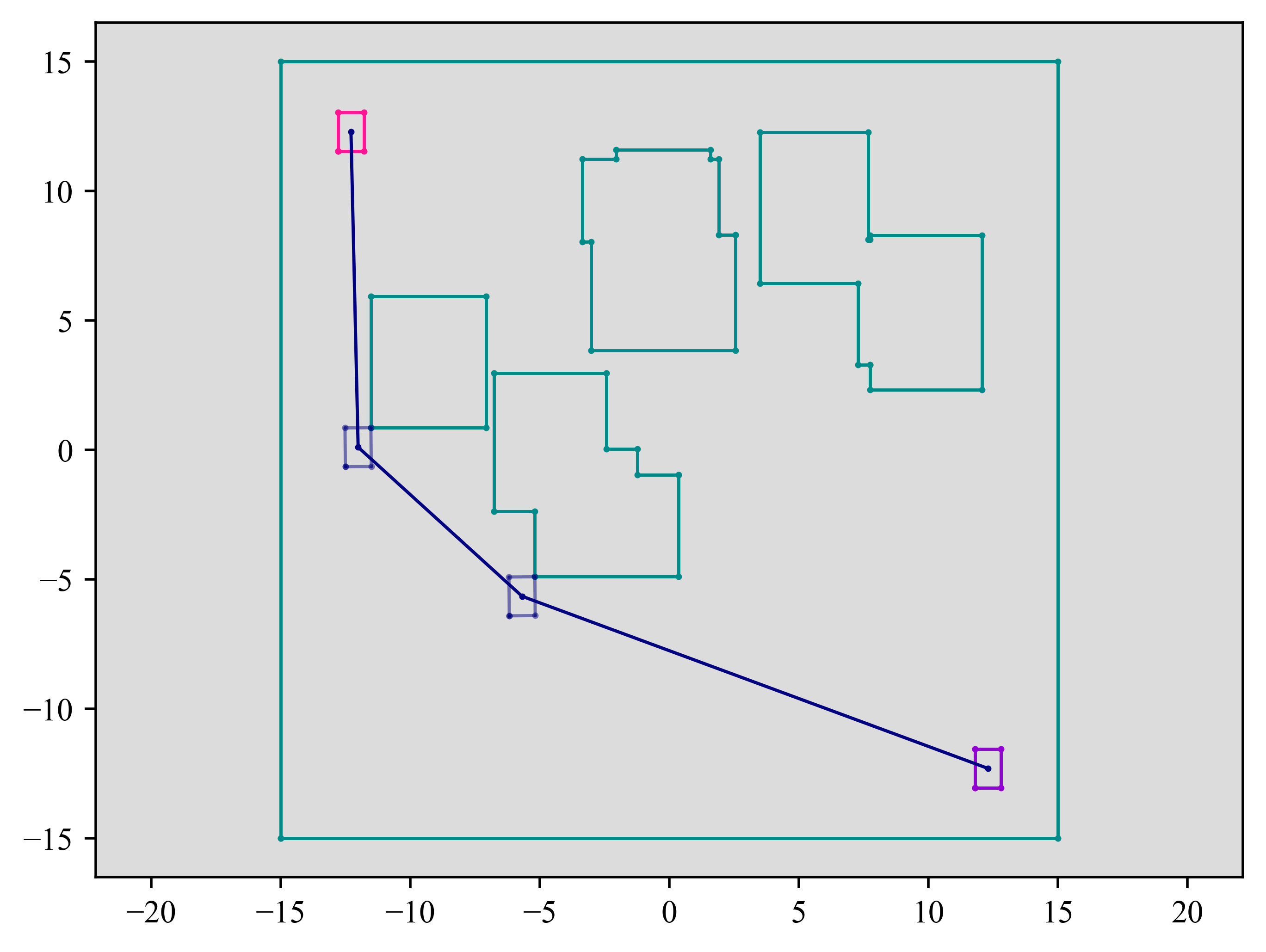}\hfill
\includegraphics[width=0.095\linewidth, trim=3.05cm 1.23cm 2.25cm 0.66cm, clip]{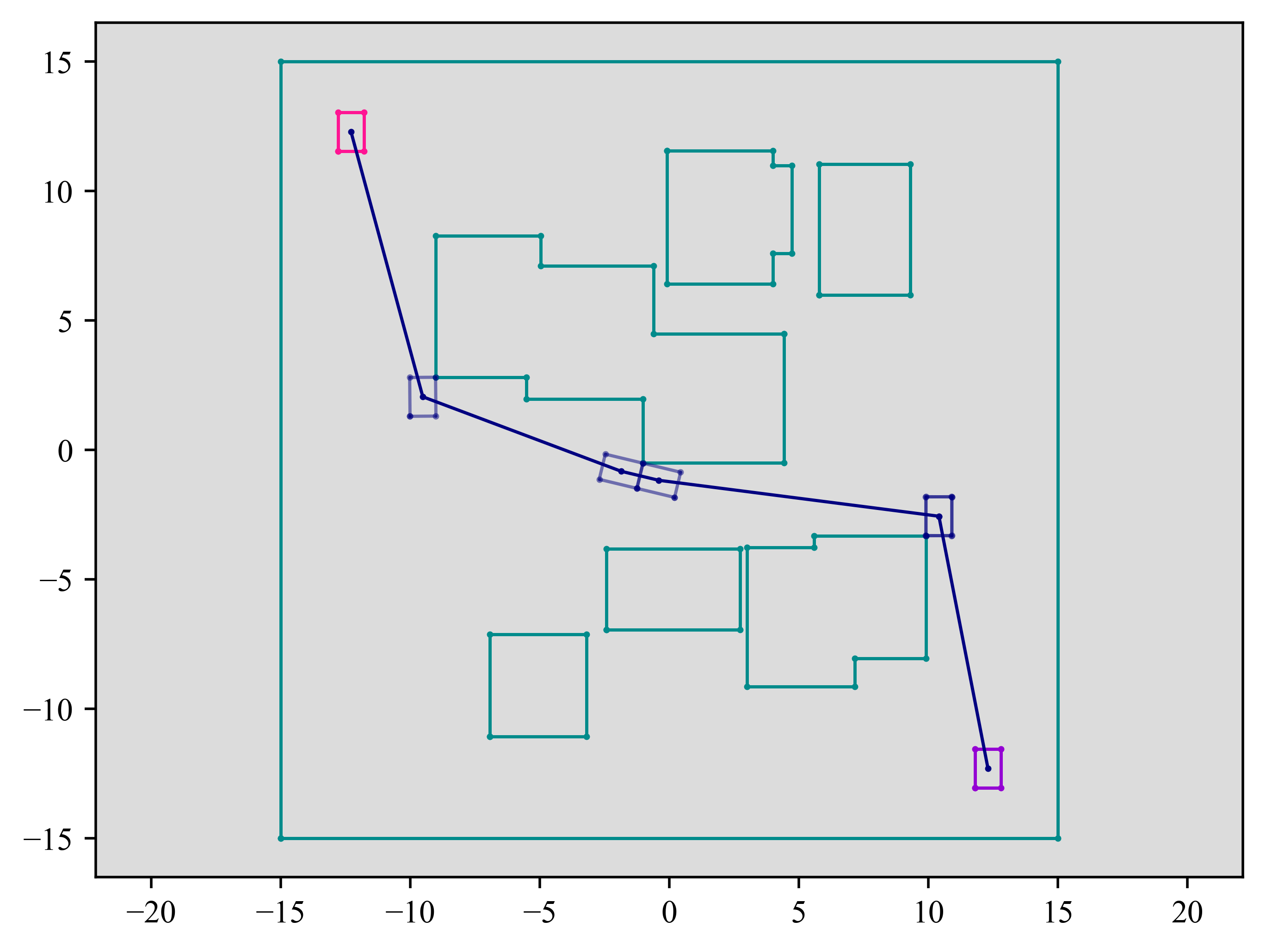}
\caption{The $10$ randomly generated simple maps used in our evaluation, where we ask a rectangular robot to travel from the top left of the map to the bottom right of the map. For each map, the (Euclidean distance-based) shortest path, found at a resolution of $360$, is also illustrated.} 
\label{fig:maps-simple}
\end{figure*}

\subsection{Quantitative Performance of \ours}
For a fair comparison, we generate a set of random benchmarking problems similar to these used in \cite{karaman2011sampling,strub2020adaptively} for evaluating the performance of \ours, using the following procedure: Within predefined map boundaries, $(x, y)$ positions are uniformly randomly sampled for placing convex polygonal obstacles. Each polygonal obstacle is also randomly created by sampling edges according to some proper rules. Intersecting polygons are merged. 

Two example problems are illustrated in Fig.~\ref{fig:eval-env}. The left subfigure shows the ``simple'' setting with a relatively small number of obstacles (total number of polygons sampled: 15). The right subfigure shows the ``hard'' setting with a large number of obstacles (total number of polygons sampled: 100).
For both, the projected $\RVG$s at a resolution of $36$ are shown, together with paths with the shortest distances as computed by \ours. 

\begin{figure}[h!]
\centering
\includegraphics[width=0.4\linewidth, trim=21.3cm 8.45cm 15.55cm 4.5cm, clip]{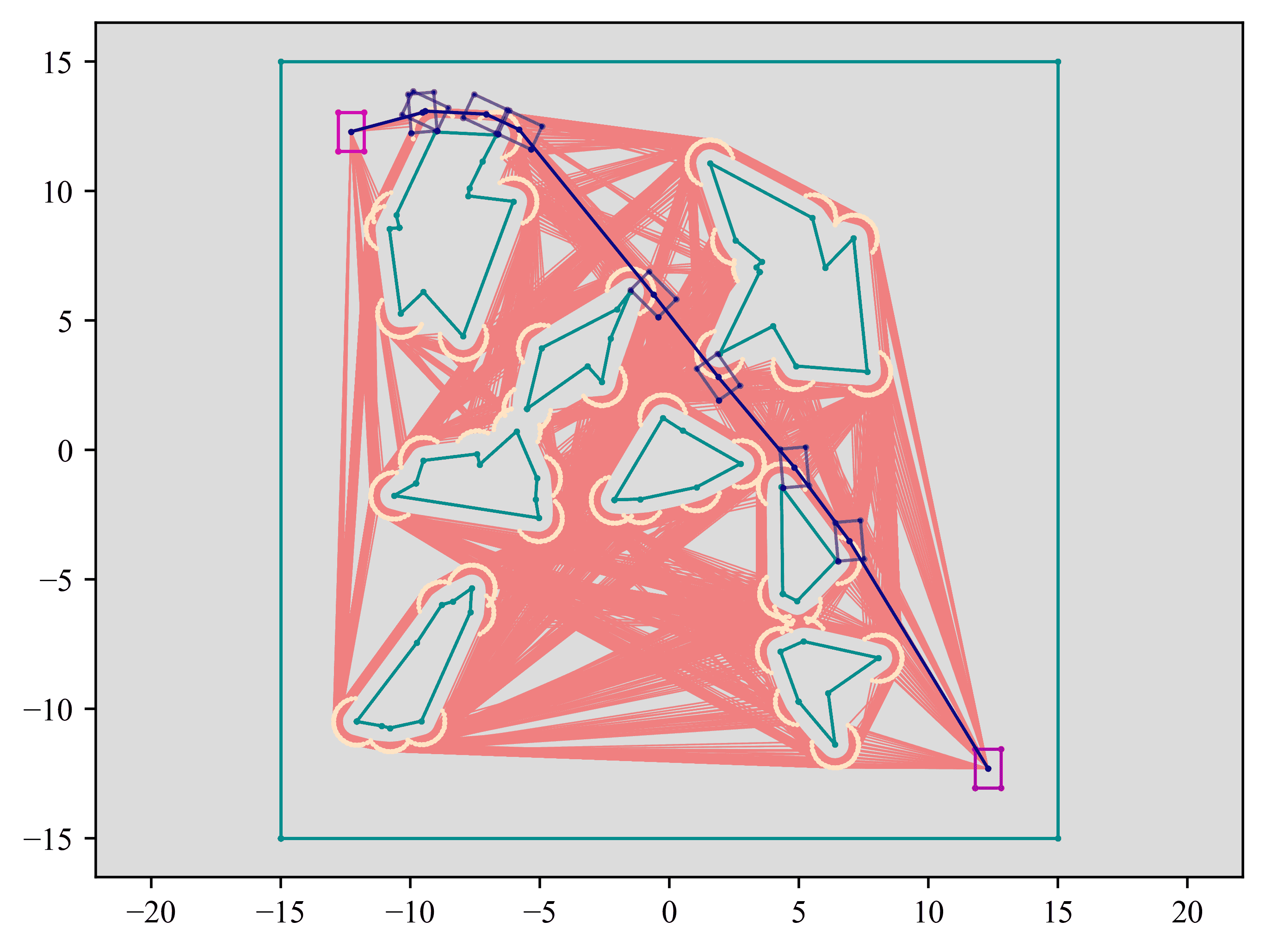}\hspace{7mm}
\includegraphics[width=0.4\linewidth, trim=21.3cm 8.45cm 15.55cm 4.5cm, clip]{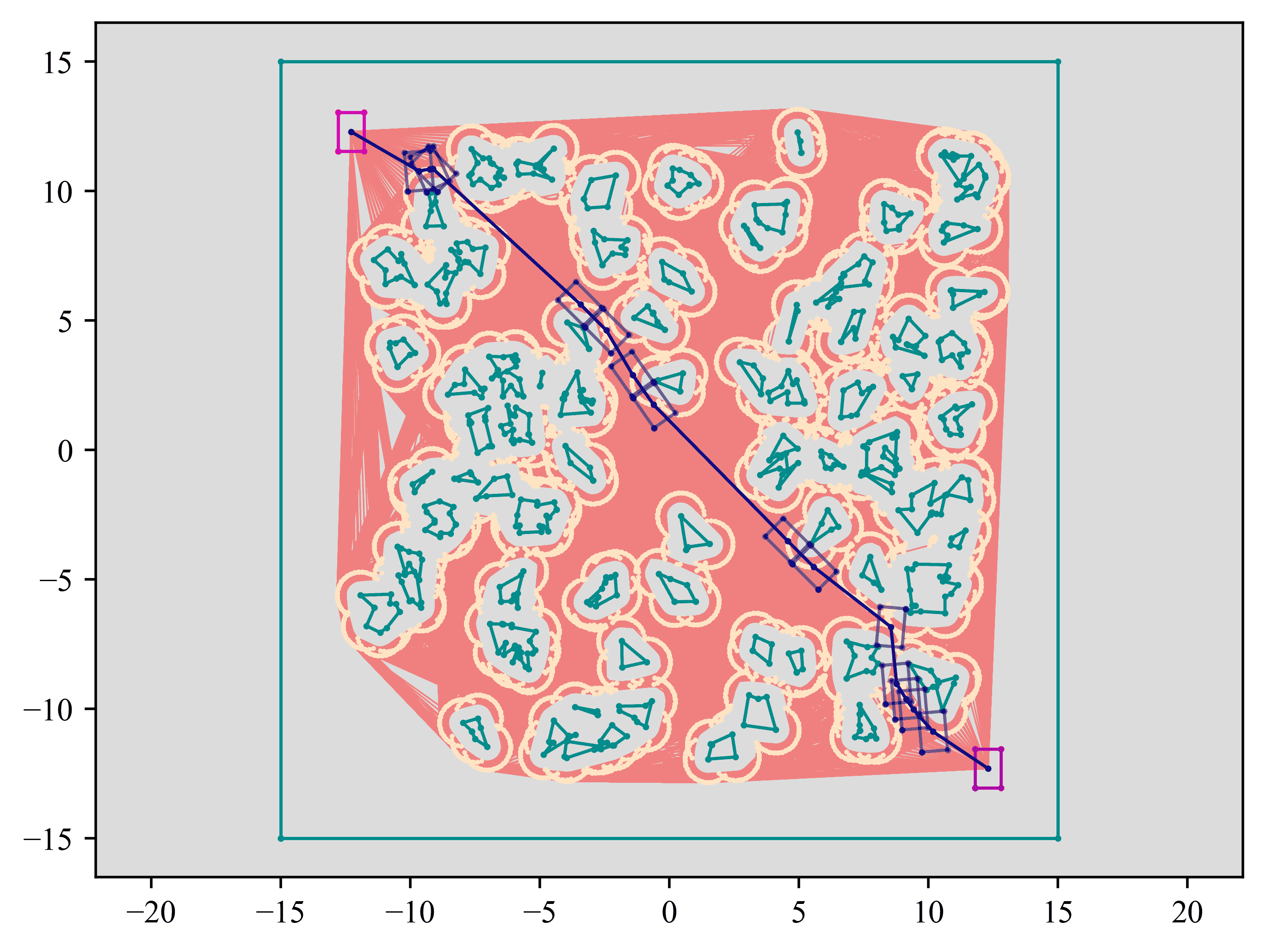}
\caption{Two randomly generated environments used in our performance evaluation. The maps, the $\RVG$s, the start and goal configurations, and the shortest paths found on the $\RVG$ are shown.   [Left] A \emph{simple} case. [Right] A \emph{hard} case.}
\label{fig:eval-env}
\end{figure}

\subsubsection{Performance of \ours}
We begin by benchmarking the standalone performance of \ours on computational speed and solution quality. First, for simple maps, for reference, the $10$  problems used for evaluation are shown in Fig.~\ref{fig:maps-simple} together with the highest quality (Euclidean) shortest path found by \ours at a resolution of $360$. We observe that the problems are rather diverse and already somewhat challenging (causing problems for sampling-based methods). In Fig.~\ref{fig:benchmark-simple}, the computational time for map building and search, as well as the path cost (length in this case since $\alpha = 1$) are shown for each map, for resolutions from $8$ to $360$. 

\begin{figure}[h!]
\centering
\includegraphics[width=\linewidth]{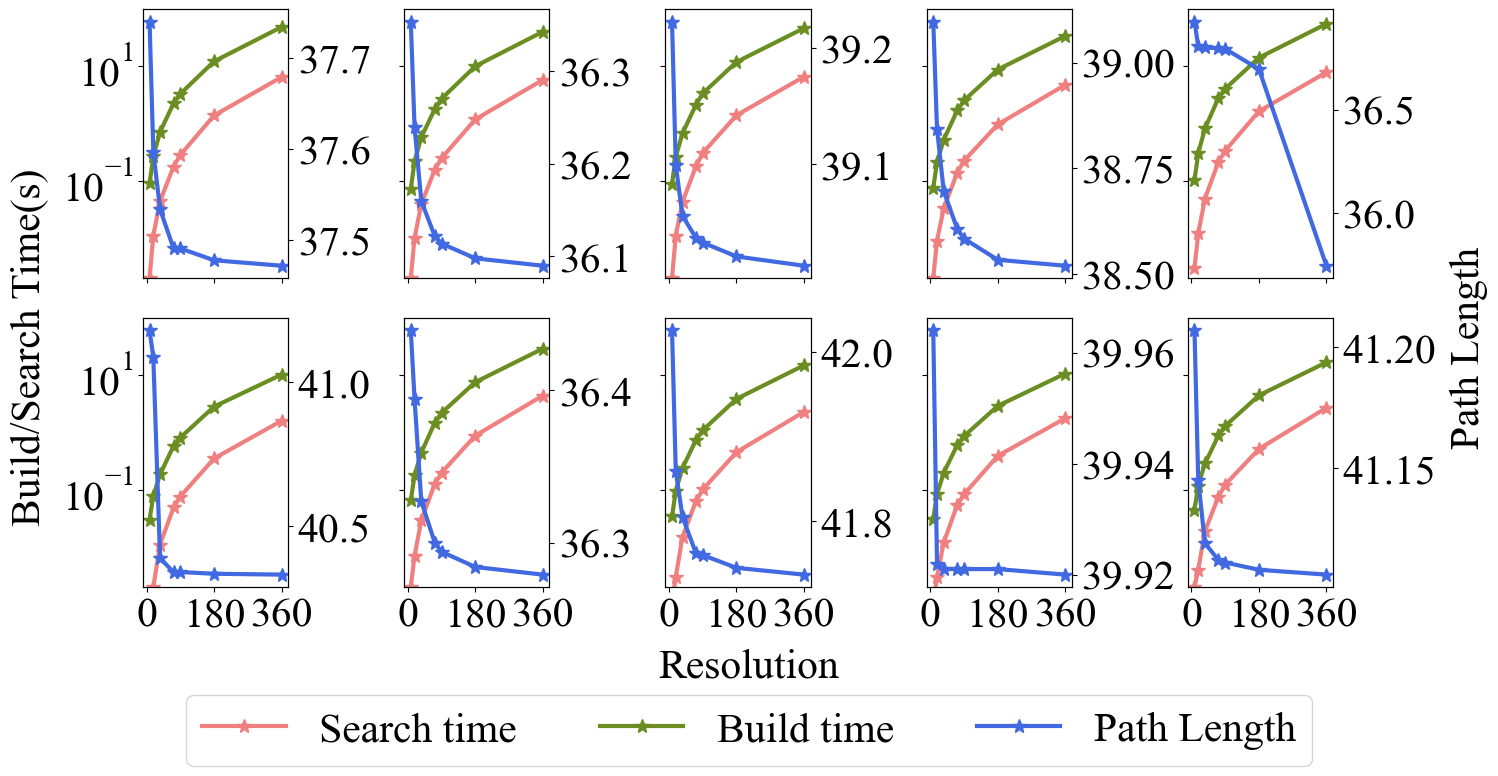}
\caption{Benchmark results for the $10$ randomly generated maps given in Fig.~\ref{fig:maps-simple}. The $\RVG$ building time, path search time, and path length ($\alpha = 1$, $\beta =0$ in Eq.~\eqref{eq:cost}) are shown for resolutions $8, 18, 36, 72, 90, 180$, and $360$.}
\label{fig:benchmark-simple}
\end{figure}

We observe that building the $\RVG$ can be done well under a second at low resolutions, a few seconds at medium resolutions, and tens of seconds at high resolutions. Searching for the optimal solution on the $\RVG$ generally takes a fraction of a second except at the highest resolutions. For the majority of the randomly sampled evaluation test cases, most near-optimal paths (through manual examination of the map, and shown in Fig.~\ref{fig:maps-simple}) are found at low resolutions, generally taking a total of no more than $10$ seconds. 

In Fig.~\ref{fig:benchmark-hard}, benchmarking results similar to that in Fig.~\ref{fig:benchmark-simple} are given for $10$ randomly generated hard maps, one of which is given in the right subfigure of Fig.~\ref{fig:eval-env}. The general trends here mimic those observed in Fig.~\ref{fig:benchmark-simple}, though the computation times now take longer. Again, we can compute a high-quality solution for most maps at low to medium resolution, taking tens of seconds. 

\begin{figure}[h!]
\centering
\includegraphics[width=\linewidth]{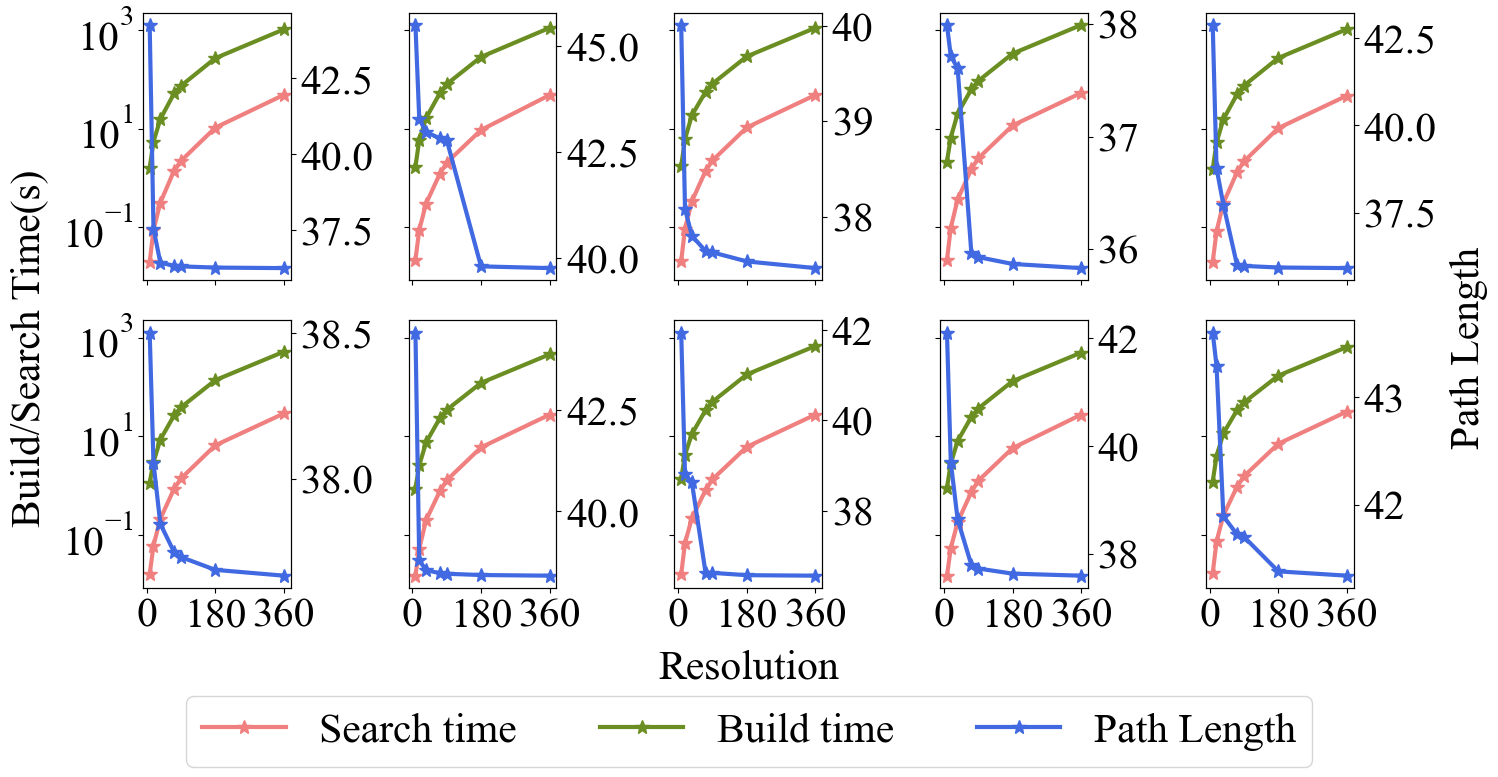}
\caption{Benchmark results for $10$ randomly generated hard problems. The $\RVG$ building time, path search time, and path length ($\alpha = 1$, $\beta =0$ in Eq.~\eqref{eq:cost}) are shown for resolutions $8, 18, 36, 72, 90, 180$, and $360$.}
\label{fig:benchmark-hard}
\end{figure}

\begin{figure*}[t!]
\centering
\includegraphics[width=\linewidth]{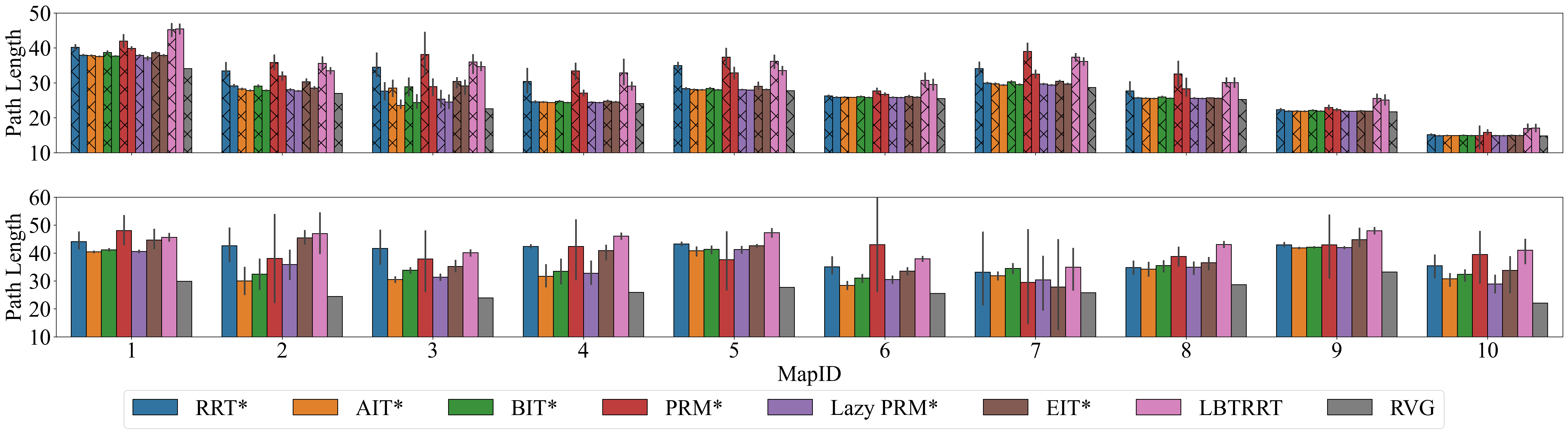}
\caption{[Top] Average path lengths over $10$ runs for $10$ environments returned by different motion planners with 1s/10s planning time vs \ours with 0.5s planning time on average (patterned/plain bars represent the results with 1s/10s running time). [Bottom] Average path length over 10 runs for 10 hard environments returned by different motion planners with 20s planning time versus \ours with 12.6s planning time on average.}
\vspace{1mm}
\label{fig:compare}
\end{figure*}
Fig.~\ref{fig:benchmarking-relative} presents a different view of the results provided in Fig.~\ref{fig:benchmark-simple} and Fig.~\ref{fig:benchmark-hard}. Here, computation times and solution path lengths are scaled based on the times/lengths at the highest resolution $360$, as the resolution varies. We note that a resolution of $360$ is difficult to realize in practice, requiring the robot to move with no more than one degree of error. Therefore, solution costs obtained at this resolution can be regarded as the optimal solution in practice. Fig.~\ref{fig:benchmarking-relative} provides a clearer picture of when we hit the sweet spot in terms of good solution quality and fast computation. To reach a decent solution quality, e.g., around $1.02$-optimal as compared with the solution quality at the highest resolution, a resolution of around $36$ is generally sufficient. 

\begin{figure}[h!]
\centering
\includegraphics[width=\linewidth]{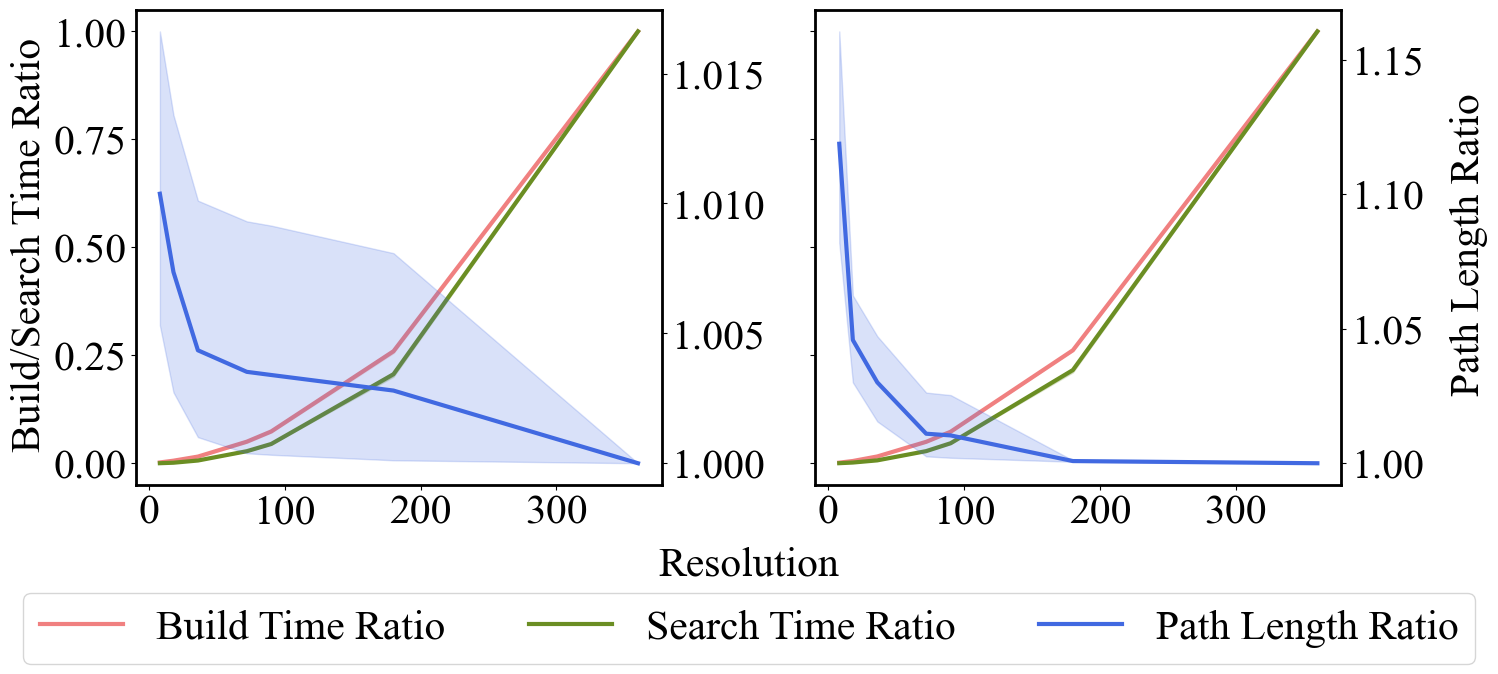}
\caption{$\RVG$ building time, path search time, and path length, scaled against these at the highest resolution. [Left] The ratios for the simple maps. [Right] The ratios for the hard maps.}
\label{fig:benchmarking-relative}
\end{figure}

\subsubsection{Comparison with State of the Art Sampling-Based Algorithms}
\ours, as a specialized multi-query method, shines even when compared with general single-query sampling-based planning methods. 
We use OMPL's implementations for all the other methods, and the collision checking is implemented with CGAL.
Our first comparison (the top subfigure of Fig.~\ref{fig:compare}) looks at the solution quality achieved for RRT*, BIT*, AIT*, EIT*, LBT-RRT, PRM*, LazyPRM*, and \ours where other methods use $1$ or $10$ seconds of planning time while \ours (at resolution $18$) uses 0.94 seconds on average for these maps to build the $\RVG$ and to execute the search. In all cases, \ours outperforms the state-of-the-art single-query sampling-based methods even when these sampling-based methods are allocated ten times the planning time budget. If multiple queries are performed, then \ours's advantage will become more obvious. 
A similar comparison experiment is carried out for the hard maps with the resolution at $18$ (the bottom subfigure of Fig.~\ref{fig:compare}). Under the hard settings, we observe that the optimality gaps between \ours and the compared single-query methods widen. 
In Fig.~\ref{fig:compare-ratio}, the following experiments are done on the simple maps. For each resolution of $8, 18, 36, 72, 90, 180$, and $360$, we solve the instances with \ours and record the total map building and searching time. Then, the sampling-based algorithms are given the same amount of time to work on the same map. We then compare the resulting solution optimality using the path length computed by \ours as the baseline. We observe that \ours again consistently outperforms by a significant margin, even under the single-query setting.
An analog for the hard maps is not provided, as performing the needed computation for the sampling-based methods will take hundreds of hours. 

\begin{figure}[h!]
\centering
\includegraphics[width=\linewidth]{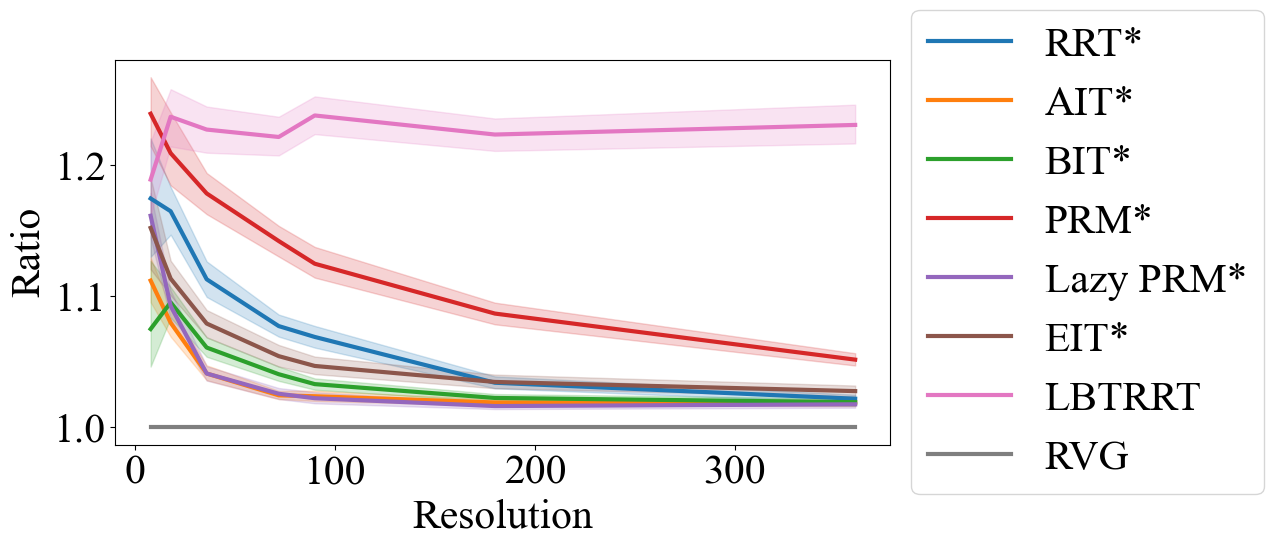}
\caption{Solution optimality ratio ($y$-axis) between sampling-based methods and \ours at different resolutions ($x$-axis) for simple maps, where each sampling-based method is executed for $10$ times per data point with each run using the time budget used by \ours (including map building and path search) at the given resolution.}
\label{fig:compare-ratio}
\vspace{1mm}
\end{figure}

\section{Conclusion and Discussions}\label{sec:conclusion}
In this study, we build the \ours algorithm to explore the natural idea of slicing the rotational degree of freedom, computing a 2D visibility graph for each slice, and connecting them to form a \emph{rotation-stacked visibility graph} ($\RVG$), which can be subsequently used to carry out multiple approximate shortest-path queries for a polygonal holonomic robot traversing a 2D polygonal environment. After a $\RVG$ building phase, \ours supports path queries of different cost metrics as specified in Eq.~\eqref{eq:cost}. We establish that \ours is resolution complete and asymptotically optimal, and show that it achieves highly favorable results in comparison to state-of-the-art sampling-based methods.

Our study opens many follow-up questions; we mention a few here. First, only holonomic robots are currently supported. It would be interesting to explore extensions to $\RVG$ to support other types of robots, e.g., car-like robots \cite{reeds1990optimal}. 
Second, while \ours achieves decent performance, there is much room to improve its computational performance further. An obvious starting point to employ faster $\VG$ building and searching algorithms, e.g., adopting options mentioned in~\cite{mitchell2017shortest} and/or through parallelization of computation. As \ours matures further, it can potentially open up more applications, e.g., for autonomous driving. 

{\small
\bibliographystyle{formatting/IEEEtran}
\bibliography{bib/_main}
}


\end{document}